\declaretheorem{theorem}
\declaretheorem[sibling=theorem]{lemma}
\DeclareMathOperator*{\argmin}{arg\,min}
\newcommand{\R}{\mathbb{R}}
\newcommand{\E}{\mathbb{E}}
\newcommand{\N}{\mathcal{N}}
\newcommand{\tr}{\text{tr}}
\newcommand{\loss}{\ell}
\newcommand{\totalloss}{L}
\newcommand{\param}{\phi}
\newcommand{\Param}{\Phi}
\newcommand{\mparam}{\theta}
\newcommand{\mParam}{\Theta}
\newcommand{\noise}{\xi}
\newcommand{\hparam}{\theta}
\newcommand{\stats}{z}
\title{A Closer Look at Learned Optimization:\\ Stability, Robustness, and Inductive Biases}
\author{%
  James Harrison, Luke Metz, Jascha Sohl-Dickstein \\
  Google Research, Brain Team\\
  \texttt{\{jamesharrison, lmetz, jaschasd\}@google.com} 
}
\begin{document}

\maketitle

\setlength{\textfloatsep}{.8\baselineskip plus 0.1\baselineskip minus 0.4\baselineskip}
\begin{abstract}
  Learned optimizers---neural networks that are trained to act as optimizers---have the potential to dramatically accelerate training of machine learning models. However, even when meta-trained across thousands of tasks at huge computational expense, blackbox learned optimizers often struggle with stability and generalization when applied to tasks unlike those in their meta-training set. In this paper, we use tools from dynamical systems to investigate the inductive biases and stability properties of optimization algorithms, and apply the resulting insights to designing inductive biases for blackbox optimizers. Our investigation begins with a noisy quadratic model, where we characterize conditions in which optimization is stable, in terms of eigenvalues of the training dynamics. We then introduce simple modifications to a learned optimizer's architecture and meta-training procedure which lead to improved stability, and improve the optimizer's inductive bias. We apply the resulting learned optimizer to a variety of neural network training tasks, where it outperforms the current state of the art learned optimizer---at matched optimizer computational overhead---with regard to optimization performance and meta-training speed, and is capable of generalization to tasks far different from those it was meta-trained on. 
\end{abstract}
  
\section{Introduction}

Algorithms for stochastic non-convex optimization are a foundational element of modern neural network training \citep{bottou1991stochastic}. Choice of optimization algorithm (and the associated hyperparameters) is critical for achieving good performance of the underlying model, as well as training stability. Practically, there are few formal rules for choosing optimizers and hyperparameters, with researchers and practitioners typically defaulting to a small number of optimizers and associated hyperparameters with which they are familiar. Moreover, the algorithms chosen between are usually derived based on analysis in the convex setting or informal heuristics, and algorithm performance varies substantially across different real world tasks~\citep{schmidt2021descending}.

To resolve these issues, \textit{learned optimizers} have been proposed \cite{bengio1990learning, bengio1992optimization, runarsson2000evolution,  andrychowicz2016learning, wichrowska2017learned, metz2019understanding}. These optimizers are typically either composed of blackbox function approximators (such as neural networks) or hand-designed functions containing hyperparameters that are learned.
The distinguishing factor of this class of optimizers is the training methodology: because global search over hyperparameters is computationally intractable for optimizers with many parameters, local gradient-based methods are used \cite{wichrowska2017learned, metz2019understanding, metz2022practical}.
Thus, the \textit{meta-training} of these optimizers---in which they are trained to optimize some metric of performance for the underlying model such as validation loss---has strong similarities to training other neural network models.

While learned optimizers are potentially transformative, they suffer from several fundamental flaws preventing their broad uptake.
The first is reduced performance and stability when they are applied in circumstances unlike those in which they are meta-trained---for instance, learned optimizers often diverge when used to train models for more steps than the learned optimizer was applied for during meta-training \citep{lv2017learning, wu2018understanding, metz2020tasks}.
The second is instability and inconsistency in the meta-training of the learned optimizers themselves---learned optimizer performance is often highly dependent on random seed, and meta-training trajectories can get stuck for many steps and make inconsistent progress~\citep{metz2019understanding, metz2022practical}.

In this paper, we:
\begin{itemize}
    \item Use tools from dynamical systems theory to characterize the stability of the parameter dynamics induced by learned optimizers, via eigenvalue analysis of training in the noisy quadratic setting.
    \item Propose a series of changes to the architecture and training of learned optimizers that improve their stability and inductive bias. 
      These changes include incorporating a tunable contribution from a nominal optimizer with descent guarantees, using large magnitude weight decay on the optimizer's parameters, and preconditioning the updates generated by the learned optimizer before applying them to the parameters.
    \item Demonstrate experimentally that the resulting \emph{stabilized through ample regularization} (STAR) learned optimizer is more stable and faster to meta-train, is more stable and performant than baseline learned optimizers even when applied for many more training steps than used during meta-training, and generalizes well to new tasks which are dissimilar from tasks it was meta-trained on.
    For instance, STAR generalizes well to a transformer task with 175$\times$ more parameters and 5$\times$ the number of training steps than the MLP it was meta-trained to optimize. 
    Effective learned optimizer generalization after meta-training on a single task has not previously been demonstrated.
\end{itemize}

\section{Related Work}

Learned optimization has seen a recent surge of interest motivated by the success of deep learning methods in a wide variety of problems \cite{bengio1990learning, bengio1992optimization, runarsson2000evolution, andrychowicz2016learning, wichrowska2017learned, franceschi2018bilevel, metz2019using, metz2019understanding, shen2020learning, zheng2022symbolic}. These methods fit into a larger class of \textit{meta-learning} methods, which aim to leverage the success of expressive learning algorithms (such as neural networks) to learn \textit{learning algorithms} \cite{schmidhuber1987evolutionary, hospedales2020meta, vanschoren2018meta}.
Meta-learning algorithms have been particularly successful and widely investigated in the domain of few-shot learning, in which an agent must learn to make predictions based on a small amount of data.
Early approaches to this problem focused on blackbox models such as recurrent networks \cite{santoro2016meta, hochreiter2001learning, wang2016learning, duan2016rl} due to their expressive power.
However, works that integrate algorithmic inductive biases---for example by exploiting gradient descent \cite{finn2017model, nichol2018first}, metric learning \cite{snell2017prototypical, ren2018meta}, convex optimization \cite{bertinetto2018meta, lee2019meta}, exact or approximate Bayesian inference \cite{harrison2018meta, grant2018recasting, willes2021bayesian}, changepoint detection \cite{nagabandi2019deep, harrison2019continuous}, and other algorithmic primitives---have been highly successful when applied to few-shot learning.
However, similar investigation of inductive biases for meta-learning beyond the few-shot learning setting has been largely absent.

The difficulty of stable training of large models led to the development of adaptive optimization algorithms (such as Adagrad \cite{duchi2011adaptive}, RMSProp \cite{tieleman2012lecture}, Adam \cite{kingma2014adam}, and many other) which are relatively invariant to hyperparameter choice. In practice, however, tuning of these hyperparameters is still necessary both over the course of training and across optimization problems. While further methods have been developed for automatic online hyperparameter tuning \cite{baydin2017online, moskovitz2019first, bae2022amortized}, a line of work has focused on meta-learning a neural network that chooses hyperparameters as a function of training history \cite{daniel2016learning, xu2017reinforcement, xu2019learning, ichnowski2021accelerating, almeida2021generalizable}.
This approach results in inherently better stability of learning algorithms as the output is typically restricted to (in expectation) descent directions.
In this work, we find such inherent stability is crucial for guiding training, especially early in meta-training. 

Shortly after the advent of deep learning, blackbox optimizers were developed which aimed to exploit the expressivity of neural networks in optimizer design.
Of particular relevance to this work are \cite{li2016learning, li2017learning}, which use reinforcement learning (combined with guided policy search \cite{levine2013guided}) and the approaches taken in \cite{andrychowicz2016learning, wichrowska2017learned} which are directly trained via backpropagation through time \citep{maclaurin2015gradient}. The guided policy search learning strategy is one of several methods that leverage forms of curriculum learning to stabilize learning \citep{chen2020training}.
These works all propose networks that ingest (a history of) gradients and return a parameter updates.
As investigated by \cite{metz2019understanding}, the long computational graphs necessitated by meta-training can result in chaotic behavior which makes meta-training extremely noisy.
Numerous approaches to address this problem have been proposed. \cite{metz2019understanding, vicol2021unbiased} propose truncated zeroth order optimization which avoids extremely noisy reparameterization gradients combined with truncated computation graphs. \cite{li2016learning, li2017learning, almeida2021generalizable} among others use reinforcement learning, in which the truncated computational graph is augmented with a value function capturing dependency of future losses on the policy.
\cite{wichrowska2017learned, metz2022practical} leverage input features computed via fixed momentum operators which yield stable-by-design hidden states.
In this paper we dive into the question of stability of meta-training, and find that carefully designing stability into blackbox optimizers is both necessary, and yields optimizers that both outperform the prior state of the art (at matched optimizer computational overhead) and improve overall stability. 

\section{Problem Statement}

We will consider the problem of training a neural network 
by optimizing its parameters $\param \in \Param \subseteq \R^N$.
We aim to minimize the expectation over data of loss function $\loss: \Param \times \mathcal{X} \to \R$ which acts on parameters $\param$ and data (point or minibatch) $x \in \mathcal{X}$. We define our expected loss at iteration $t$ as 
\begin{equation}
    \mathcal{L}_t(\param) = \E_{x_t}[\loss(x_t; \param)].
\end{equation}
A learned optimizer is defined by a 
parameteric update function
$f(\cdot; \mparam)$ with meta-parameters $\mparam \in \mParam$, which acts on a history of training statistics such as parameters, loss values, gradients, and training iteration number.
We write these combined input features at time $t$ as $\stats_t$. 
When used to train a model with parameters $\phi$, then $f(\cdot; \mparam)$ maps input state $\stats_t$ (as well as optimizer hidden state which we do not explicitly write) to updated parameters $\param_{t+1}$, with an update of the form
\begin{equation}
    \label{eq:param_update}
    \param_{t+1} = \param_t - f(\stats_t; \mparam).
\end{equation}

In this paper we consider only the problem of optimizing the train loss of the inner problem. 
A particular strength of learned optimizers is that they can be trained with respect to validation loss \cite{metz2019understanding, almeida2021generalizable}. 
However, this capability is independent and complementary to the topics of this paper, so following \citet{metz2022practical}
we focus on train loss to remove factors of variation from experimentation. 
Our goal in meta-learning is thus to find the parameters $\hat{\mparam}$ that minimize the meta-loss $\totalloss(\mparam; T)$,
\begin{align}
    \hat{\theta} = \argmin_{\mparam \in \mParam}
    \totalloss(\mparam; T)
    , \qquad \qquad 
    \totalloss(\mparam; T) = 
    \sum_{t=1}^T w_t \mathcal{L}_t(\param_t),
\end{align}
under the inner parameter dynamics imposed by \eqref{eq:param_update}, where $w_t$ is a weighting term (as has been used previously, e.g. \cite{andrychowicz2016learning}), and where $T$ denotes a maximum (possibly infinite) run length. 

\begin{figure}
 \begin{overpic}[width=0.955\linewidth]{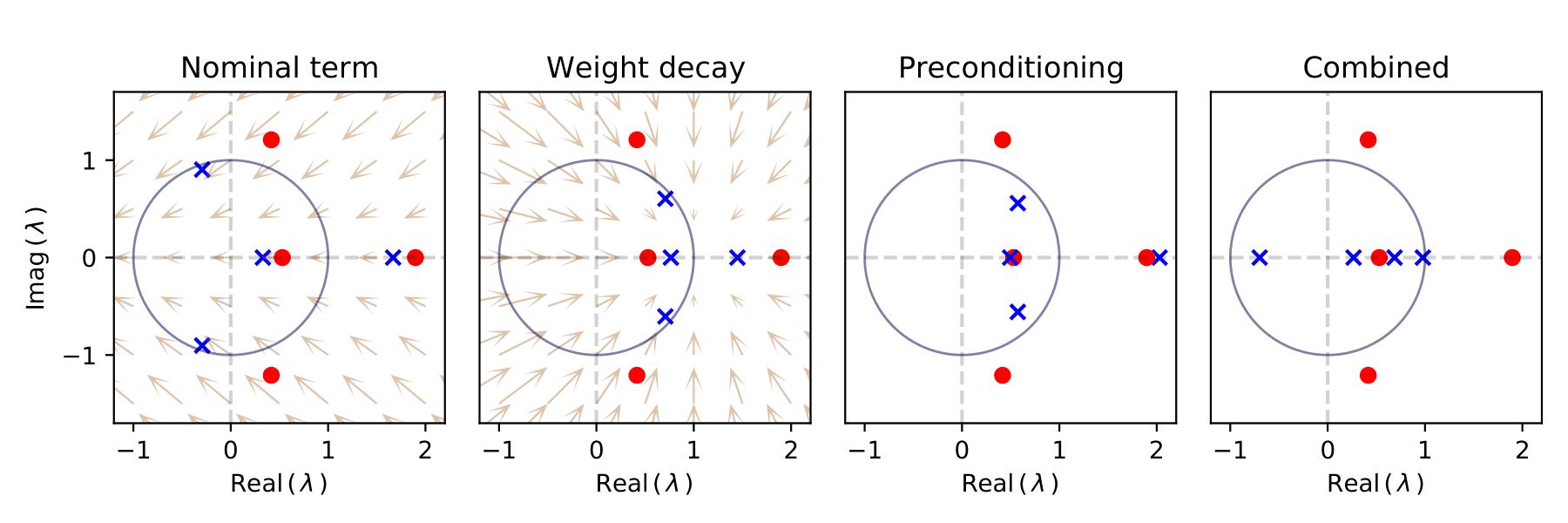}
 \put (4,1) {\textbf{\small(a)}}
 \put (29,1) {\textbf{\small(b)}}
 \put (52,1) {\textbf{\small(c)}}
 \put (75,1) {\textbf{\small(d)}}
  \end{overpic}
    \caption{
    \textbf{Nominal terms, preconditioning, and weight decay improve the inductive bias of learned optimizers.} We visualize eigenvalues of the induced parameter update map $A$ (Equation \ref{eq:dynamics1}) when a simple linear learned optimizer is applied to train a quadratic model. Eigenvalues within the unit circle correspond to stable optimization, while those outside the unit circle will lead to diverging training trajectories. Red circles and blue \texttt{X}es correspond to eigenvalues before and after making each intervention, respectively. Arrows show how an intervention will tend to shift eigenvalues as the intervention magnitude is increased.
    \textbf{(a)} Incorporating a nominal term causes eigenvalues to have a more negative real part, and to decay toward the real line. This can cause training dynamics to remain stable even when the learned optimizer alone would cause gradient ascent in some direction.
    \textbf{(b)} Increasing weight decay pulls the entries in $P$ toward 0, turning off the learned optimizer and thus pulls the eigenvalues of $A$ toward 1.
    \textbf{(c)} Pre-conditioning reduces the impact of the problem's Hessian $H$. This leads to eigenvalues that depend more on properties of the optimizer, and less on properties of the problem being optimized. 
    \textbf{(d)} Through a weighted combination of the three interventions of (a-c) all eigenvalues are mapped into the unit circle, and the learned optimizer becomes stable.
    }
    \label{fig:cartoon}
\end{figure}

\section{Understanding Optimizer Performance: The Noisy Quadratic Setting}
\label{sec:nqm}
In this section, we examine performances differences between optimizers, with a particular focus on the impacts on meta-learning of learned optimizers. 
We characterize the behavior of optimizers in the \textit{noisy quadratic} setting. This setting---consisting of a quadratic optimization problem with randomly sampled, i.i.d.~minima---aims to be representative of optimization with stochastic minibatches \cite{schaul2013no, wu2018understanding, zhang2019algorithmic}. Moreover, this setting is a reasonable proxy for wide neural networks \cite{jacot2018neural, lee2019wide}. 

The loss\footnote{Note that it is also common to consider a deterministic loss of the form $\param_t^T H \param_t$ with a noisy gradient $\nabla_t = H\param_t + \noise_t$. These settings are roughly equivalent, with our setting accumulating an extra $\tr(H\Sigma_{\noise})$ at each timestep in expectation.} at each timestep is
\begin{equation}
    \loss(\param_t) = \frac{1}{2} (\param_t - \noise_t)^\top H (\param_t - \noise_t) 
\end{equation}
with i.i.d.~$\noise_t \sim \N(0,\Sigma_{\noise})$,
yielding gradient $\nabla_t = \nabla_{\param_t} \loss(\param_t) = H (\param_t - \noise_t)$. 
Our chosen loss is the average of the loss at each timestep, and thus $w_t = 1/T$ for all $t$. 

We consider an update of the form 
\begin{equation}
    \param_{t+1} = \param_t - (g_t + P \nabla_t)
\end{equation}
where $g_t$ corresponds to a \textit{nominal} term, which we do not aim to meta-learn, and $P$ corresponds to our meta-learned term (thus corresponding to $\mparam$ in the notation of the previous section), here taking the form of a dense preconditioner matrix. 
In this section, we will primarily consider nominal terms of the form $g_t = \alpha \nabla_t$, yielding combined autonomous dynamics
\begin{equation}
    \param_{t+1} = \underbrace{(I - (\alpha I + P) H)}_{A} \param_t + \underbrace{(\alpha I + P)H}_{I-A} \noise_t.
    \label{eq:dynamics1}
\end{equation}

\subsection{Nominal Terms Shift the Region of Stability}
\label{sec:nomstab}

Central to our discussion will be the notion of stability. In the study of deterministic dynamical systems (for example, taking $\Sigma_{\noise} \to 0$), asymptotic stability\footnote{Asymptotic stability is defined by limiting behavior converging to a constant, in our case chosen to be 0, $\lim_{t\to\infty} \param_t = 0$. 
Since our noisy quadratic model has a global minimum at 0, this corresponds to reaching minimum loss. See Appendix \ref{sec:stability} for more discussion of stability and related minor technical results.} 
is guaranteed in linear system \eqref{eq:dynamics1} when $\rho(A) = \max_i |\lambda_i(A)| < 1$ for all eigenvalues $\lambda_i(A)$ of $A$. Moreover, for the linear dynamical system we consider in this section, asymptotic stability implies $\lim_{T\to\infty} \totalloss(\mparam; T)$ is finite, whereas instability implies $\lim_{T\to\infty} \totalloss(\mparam; T) = \infty$ for both the deterministic and stochastic system. 
Thus, stability is a necessary condition for achieving optimizers which, in expectation, reduce the loss over long training horizons. However, we emphasize that stability here is a proxy for convergence that yields simplified discussion and analysis in our setting. 

While stability is essential for guaranteeing favorable optimizer performance, the noisy quadratic setting also gives us insight into the relationship between the stability of underlying dynamical system and the gradient with respect to meta-trained optimizer $P$. This gradient is essential for meta-training the optimizer. 
Under the dynamics \eqref{eq:dynamics1}, the parameters at time $t>0$ are 
$
    \param_t = A^t \param_0 + \sum_{k=0}^{t-1} A^{t-k-1} (I-A) \noise_k
$
and thus $\param_t \sim \N(A^t \param_0, \Sigma_t)$ where
\begin{equation}
    \Sigma_t = \sum_{k=0}^{t-1} A^{t-k-1} (I-A)\Sigma_{\noise} (A^{t-k-1} (I-A))^\top.
    \label{eq quad opt}
\end{equation}
Thus, our expected loss is
\begin{equation}
    \label{eq:totalloss}
    \totalloss(\hparam; T) = \frac{1}{T} \sum_{t=1}^T (\param_0^\top (A^t)^\top H A^t \param_0 + \tr(H (\Sigma_t + \Sigma_{\noise}))).
\end{equation}
This loss at time $t$ is polynomial in $A$ with degree $2t$ (and note that $A$ is constant across time). Thus, the gradient of this loss term with respect to $P$ is polynomial in the parameters of $A$ with degree $2t-1$. As a result, instability of the dynamics of $\param$ generally implies 
instability of the gradient of the loss with respect to $P$, which in turn implies the expected gradient magnitude and gradient variance both diverge for long horizons.  

\begin{figure}
    \centering
 \begin{overpic}[height=0.3\linewidth]{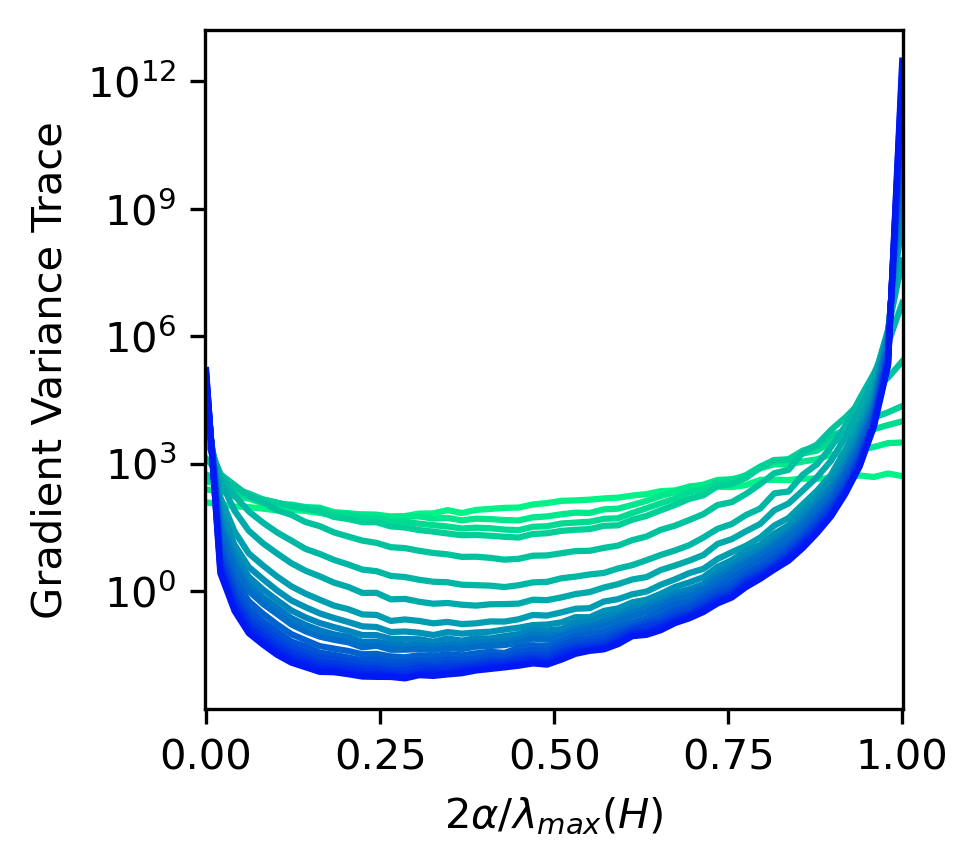}
 \put (3,2) {\textbf{\small(a)}}
  \end{overpic}
 \begin{overpic}[height=0.3\linewidth]{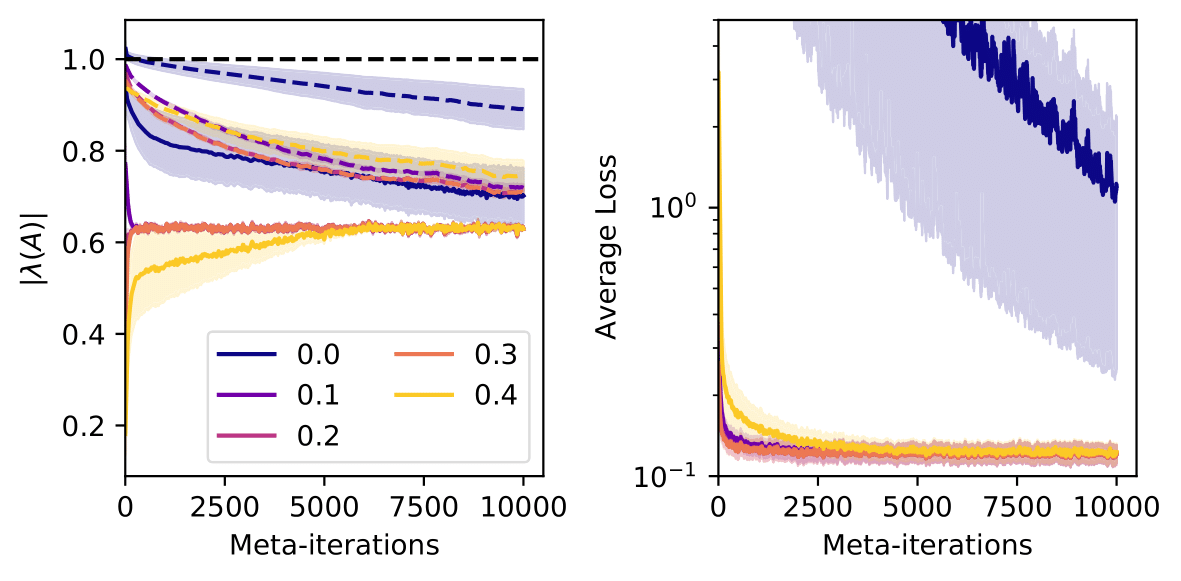}
 \put (3,2) {\textbf{\small(b)}}
 \put (53,2) {\textbf{\small(c)}}
  \end{overpic}
    \caption{
    \textbf{Nominal terms improve the stability and trainability of learned optimizers.}
    \textbf{(a)} 
    We adjust the coefficient of the nominal term for the optimizer described in Equation \ref{eq quad opt} from 0 to $2/\lambda_{\text{max}}(H)$. The plot shows the trace of the variance of the meta-parameter gradient as a function of $\alpha$, with different colors correspond to trajectory lengths from $T=1$ (green) to $T=1000$ (blue). 
    For $\alpha$ within our stability bounds, we achieve lower variance gradient estimates for all problem lengths $T$, with particularly dramatic reduction for larger $T$.
    \textbf{(b)} The magnitude of the eigenvalues of the dynamics matrix, $A$, as a function of meta-iteration during meta-training. These are shown for $2\alpha/\lambda_{\text{max}}(H) \in \{0, 0.1, 0.2, 0.3, 0.4\}$, corresponding to the legend items. Lower variance gradients in pane (a) correspond to more stable, smaller magnitude, eigenvalues. \textbf{(c)} The meta-loss (here corresponding to the mean loss up to $T=50$) for varying $\alpha$. Colors correspond to the legend of the center figure. 
    Improving the inductive bias of the learned optimizer, by adjusting $\alpha$, leads to near-optimal behavior in under 1000 meta-training iterations, while poorly chosen biases may require $100\times$ as many iterations, or fail to train entirely. 
    See Appendix \ref{sec:nqm_expt_details} for experimental details.
    }
    \label{fig:NQM}
\end{figure}

How can we guarantee stability of the update dynamics? We have stated previously that we require\footnote{Note that our analysis will include the case where $\rho(A) = 1$, implying \textit{marginal} stability in which the system neither converges nor diverges.} $\rho(A) < 1$, which we can map to eigenvalues of $P$ via the following result. We make the technical assumption that $P$ has real eigenvalues and is diagonalizable; this is satisfied in the case that $P$ is symmetric as is common for preconditioner matrices. This assumption allows us to strictly order eigenvalues of $A$, and we write $\lambda_{\text{min}}(A) = \min_i \lambda_i(A)$ and $\lambda_{\text{max}}(A) = \max_i \lambda_i(A)$. More importantly, this assumption clarifies the presentation in this primarily pedagogical discussion.  

\begin{theorem}
\label{thm:nom_stability}
Let $A = I - \alpha H - P H$ with $H$ symmetric positive definite, $P$ diagonalizable with real eigenvalues, and $\alpha\geq0$. Then $\rho(A) \leq 1$ if
\begin{align}
    -\alpha &\leq \lambda_{\text{min}}(P) \label{eq:lbound_eig}\\
    \lambda_{\text{max}}(P) &\leq \frac{2}{\lambda_{\text{max}}(H)} - \alpha \label{eq:ubound_eig}
\end{align}
\end{theorem}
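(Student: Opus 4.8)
The plan is to reduce the claim to a statement about a real symmetric matrix and then bound its extreme eigenvalues. Write $M := \alpha I + P$, so that $A = I - MH$, and pass to the conjugate matrix $\tilde{A} := H^{1/2} A H^{-1/2} = I - H^{1/2} M H^{1/2}$. Since $H$ is positive definite, $H^{1/2}$ is invertible, so $\tilde{A}$ and $A$ are similar and $\rho(A) = \rho(\tilde{A})$. The point of the technical assumption on $P$ (and in particular of the motivating case $P = P^\top$) is exactly that it makes $H^{1/2} M H^{1/2}$ a real symmetric matrix, so that $\tilde{A}$ is symmetric, its eigenvalues are real, and $\rho(\tilde{A}) \le 1$ is equivalent to the two-sided operator inequality $0 \preceq H^{1/2} M H^{1/2} \preceq 2I$. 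The proof then amounts to verifying these two inequalities from \eqref{eq:lbound_eig}--\eqref{eq:ubound_eig}.

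For the lower inequality: because $H^{1/2}$ is invertible, $H^{1/2} M H^{1/2} \succeq 0$ if and only if $M = \alpha I + P \succeq 0$, which is $\lambda_{\text{min}}(P) \ge -\alpha$, i.e.\ \eqref{eq:lbound_eig}. For the upper inequality, granted $M \succeq 0$ from the previous step, submultiplicativity of the spectral norm gives $\| H^{1/2} M H^{1/2} \|_2 \le \|H^{1/2}\|_2^2 \, \|M\|_2 = \lambda_{\text{max}}(H)\,(\alpha + \lambda_{\text{max}}(P))$, using that $M$ is symmetric positive semidefinite so $\|M\|_2 = \lambda_{\text{max}}(M) = \alpha + \lambda_{\text{max}}(P)$, and noting $\alpha + \lambda_{\text{max}}(P) \ge \alpha + \lambda_{\text{min}}(P) \ge 0$. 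Hence $H^{1/2} M H^{1/2} \preceq \lambda_{\text{max}}(H)(\alpha + \lambda_{\text{max}}(P))\, I \preceq 2I$, the last step being a rearrangement of \eqref{eq:ubound_eig}. Combining, $-I \preceq \tilde{A} = I - H^{1/2} M H^{1/2} \preceq I$, so every eigenvalue of $\tilde{A}$, and therefore of $A$, lies in $[-1,1]$, giving $\rho(A) \le 1$. (An equivalent route for the upper bound avoids submultiplicativity: write $\lambda_{\text{max}}(H^{1/2}MH^{1/2}) = \max_{x\neq 0}\,(x^\top M x)/(x^\top H^{-1} x)$ and bound the numerator by $\lambda_{\text{max}}(M)\|x\|^2$ and the denominator below by $\|x\|^2/\lambda_{\text{max}}(H)$.)

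The only genuinely delicate point is the opening reduction: we need $A$ to have real, orderable eigenvalues, and this is where the diagonalizability/real-spectrum hypothesis on $P$ enters; it is cleanest, and immediate, when $P$ is symmetric, since then $H^{1/2} M H^{1/2}$ is visibly symmetric. Everything after that is a routine congruence argument for the lower bound and a norm estimate for the upper bound; the one subtlety worth stating is that the norm estimate relies on $M$ being positive semidefinite (so that $\|M\|_2 = \alpha + \lambda_{\text{max}}(P)$ rather than $\max_i |\alpha + \lambda_i(P)|$), which is precisely why condition \eqref{eq:lbound_eig} must be in force before \eqref{eq:ubound_eig} is applied. Note also that the bounds are only sufficient, not necessary, because the submultiplicativity step is tight only when the eigenvectors of $P$ and $H$ are suitably aligned.
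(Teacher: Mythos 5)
Your proof is correct and follows essentially the same route as the paper's: condition \eqref{eq:lbound_eig} gives positive semi-definiteness of $\alpha I + P$, the spectrum of $(\alpha I + P)H$ (which you realize explicitly as the similar matrix $H^{1/2}(\alpha I + P)H^{1/2}$) is then bounded above via norm submultiplicativity by $\lambda_{\text{max}}(\alpha I + P)\lambda_{\text{max}}(H) \leq 2$, and hence all eigenvalues of $A = I - (\alpha I + P)H$ lie in $[-1,1]$. Your explicit conjugation by $H^{1/2}$ and the remark that the norm step needs $M \succeq 0$ (and, strictly, symmetry of $P$) just make precise what the paper states more loosely, so there is no substantive difference in approach.
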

The proof of this result and all other results is provided in the Appendix \ref{sec:sec4_proofs}. There are a few key takeaways from this result. The lower bound \eqref{eq:lbound_eig} highlights the stabilizing effect of the nominal optimization term. If a particular learned optimizer would result in divergence due to moving uphill in a particular direction (corresponding to the eigenvector associated with the minimum eigenvalue) without the nominal term, the addition of the nominal term $\alpha \geq 0$ gives us additional stability margin. The tradeoff is that for the upper bound \eqref{eq:ubound_eig} we lose stability margin. Thus, we can in general learn only smaller $P$ (as measured by the induced spectrum of $A$). 

In the design of a learned optimizer, the addition of the nominal term 
can lead to especially large steps which result in divergence due to violation of \eqref{eq:ubound_eig}. As such, we must be cautious to limit the magnitude of $\alpha$, and this result motivates regularization of $P$. Within the quadratic setting, we are unlikely to practically hit this upper bound as it corresponds to taking steps large enough to oscillate to divergence. For neural network optimization, however, optimization tends to predictably (albeit approximately, due to minibatch dynamics) hit this upper bound \cite{cohen2021gradient}.

We illustrate the effect on eigenvalues of incorporating a nominal term in Figure \ref{fig:cartoon}. 
We investigate the stability properties of the noisy quadratic model in Figure \ref{fig:NQM}, where we find the nominal term makes learned optimizers more stable, and easier to train.

\subsection{Preconditioners can Stabilize and Simplify the Design of Update Dynamics}
\label{sec:precond}

Adaptive preconditioners have seen widespread use in large-scale machine learning. Methods such as Adagrad \cite{duchi2011adaptive}, RMSProp \cite{tieleman2012lecture} and Adam \cite{kingma2014adam} are some of the most fundamental tools in training neural networks. These approaches typically approximate the inverse square root Hessian, and apply a transformation
\begin{equation}
    \tilde{g}_t = H^{-\frac{1}{2}} g_t
\end{equation}
where $g_t$ is some nominal optimizer. Typically this approximation is diagonal  which corresponds to maintaining some weighted recursive estimate of the magnitude of each element of $g$. This transformation roughly normalizes the step size, making steps isotropic and thus the effective learning rate may be better controlled. 

If we combine this Hessian preconditioner together with our preconditioner $P$ and nominal step size $\alpha$, the resulting dynamics are 
\begin{equation}
    \param_{t+1} = \param_t - H^{-\frac{1}{2}} (\alpha I + P) \nabla_t.
    \label{eq:precond_dynamics}
\end{equation}
Stability in this preconditioned setting is given by the following result.
\begin{lemma}
\label{lem:precond_stability}
Consider parameter dynamics given by \eqref{eq:precond_dynamics} with assumptions on $P, H$ matching Theorem \ref{thm:nom_stability}. Then \eqref{eq:precond_dynamics} is stable if \eqref{eq:lbound_eig} holds for \eqref{eq:precond_dynamics} and 
\begin{equation}
    \lambda_{\text{max}}(P) \leq \frac{2}{\sqrt{\lambda_{\text{max}}(H)}} - \alpha.
\end{equation}
\end{lemma}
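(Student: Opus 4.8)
The plan is to reduce Lemma \ref{lem:precond_stability} to Theorem \ref{thm:nom_stability} by a linear change of variables. First I would write out the autonomous dynamics matrix induced by \eqref{eq:precond_dynamics}: substituting $\nabla_t = H(\param_t - \noise_t)$ into $\param_{t+1} = \param_t - H^{-1/2}(\alpha I + P)\nabla_t$ gives $\param_{t+1} = (I - H^{-1/2}(\alpha I + P)H)\param_t + H^{-1/2}(\alpha I + P)H\,\noise_t$, so the matrix governing stability is $A = I - H^{-1/2}(\alpha I + P)H$.

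Next I would observe that $A$ is similar to $\tilde A := H^{1/2} A H^{-1/2} = I - (\alpha I + P) H^{1/2}$, where $H^{1/2}$ denotes the unique symmetric positive definite square root of $H$, which exists and is invertible precisely because $H$ is SPD. Since similar matrices have identical spectra, $\rho(A) = \rho(\tilde A)$, so it suffices to control $\rho(\tilde A)$.

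Now $\tilde A = I - (\alpha I + P)H^{1/2}$ has exactly the structural form of the matrix in Theorem \ref{thm:nom_stability} with $H$ there replaced by $H^{1/2}$ here, and the hypotheses transfer without change: $H^{1/2}$ is symmetric positive definite, $P$ is untouched by the conjugation (still diagonalizable with real eigenvalues), and $\alpha \ge 0$. Applying Theorem \ref{thm:nom_stability} verbatim then yields $\rho(\tilde A) \le 1$ provided $-\alpha \le \lambda_{\text{min}}(P)$, which is exactly \eqref{eq:lbound_eig} applied to \eqref{eq:precond_dynamics}, together with $\lambda_{\text{max}}(P) \le 2/\lambda_{\text{max}}(H^{1/2}) - \alpha$. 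Finally I would use the fact that the eigenvalues of the symmetric square root are the (positive) square roots of those of $H$, so $\lambda_{\text{max}}(H^{1/2}) = \sqrt{\lambda_{\text{max}}(H)}$, rewriting the second condition as the claimed bound $\lambda_{\text{max}}(P) \le 2/\sqrt{\lambda_{\text{max}}(H)} - \alpha$, which completes the proof.

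The argument is essentially immediate once the conjugation by $H^{1/2}$ is in place; the only points deserving care are confirming that $H^{1/2}$ and $H^{-1/2}$ are well defined and that the similarity transform preserves the spectrum (both resting on $H \succ 0$), and checking that every hypothesis of Theorem \ref{thm:nom_stability}---in particular the diagonalizability and real-spectrum assumption on $P$ and the sign condition $\alpha \ge 0$---survives the substitution $H \mapsto H^{1/2}$. I do not expect a genuine obstacle here; the only mildly delicate step is the bookkeeping relating $\lambda_{\text{max}}(H)$ to $\lambda_{\text{max}}(H^{1/2})$, which is exactly what produces the square root in the stability bound and reflects how the Hessian preconditioner "softens" the problem's conditioning.
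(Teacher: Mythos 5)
Your proof is correct and follows essentially the same route as the paper: both hinge on a similarity transformation by $H^{1/2}$ (you conjugate $A$ directly to $I-(\alpha I+P)H^{1/2}$, while the paper equivalently conjugates $P$ inside the product) and then invoke Theorem \ref{thm:nom_stability} with $H$ replaced by $H^{1/2}$, using $\lambda_{\text{max}}(H^{1/2})=\sqrt{\lambda_{\text{max}}(H)}$. Your packaging, applying the theorem verbatim after checking its hypotheses for $H^{1/2}$, is if anything a slightly cleaner statement of the same argument.
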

Note, this results in a looser upper bound than \eqref{eq:ubound_eig} if $\lambda_{\text{max}}(H)>1$.

Why apply the preconditioner to the output of the learned optimizer as opposed to the input gradient? Normalization at the output of the learned component results in better robustness than normalizing the input, as it is (relatively) robust to arbitrary initializations of the blackbox term. We expand on this in both Section \ref{sec:robustness}  (for quadratic problems) and Section \ref{sec:design} (for general learned optimizers). 

\subsection{Adaptive Nominal Terms Improve Robust Stability}
\label{sec:robustness}

We have so far motivated choosing a nominal $\alpha>0$ to bias the optimizer dynamics toward descent/stability. However, it is likely that we do not want to leave $\alpha$ fixed over the course of inner or outer (meta) training. In inner training, decreasing the learning rate over the course of training has been shown to improve empirical performance (and is often necessary for guaranteeing convergence \cite{robbins1951stochastic}), and is almost always done when training neural networks. 

For simple nominal gradient estimators and comparably complex blackbox terms, the blackbox model should be able to cancel out the nominal term, and induce the effects of reducing $\alpha$. 
Setting $P = P^* - \alpha I$ with nominal gradient term $g_t = \nabla_t$ yields closed loop dynamics
\begin{equation}
    \param_{t+1} = \param_t - \alpha \nabla_t - (P^* - \alpha I) \nabla_t = \param_t - P^* \nabla_t
\end{equation}
which is optimal with respect to meta-loss for optimal $P^*$.
In this subsection, we argue from the point of view of robust stability that this strategy is suboptimal relative to direct control of the magnitude of the nominal and blackbox term. 

We introduce a multiplicative error model\footnote{
Our diagonal multiplicative error model is related to a standard formulation within the analysis of the robust stability of linear dynamical systems, known as D-stability \cite{johnson1974sufficient, hershkowitz1992recent, de1999lmi}, although we emphasize that D-stability analysis is usually in continuous time.} 
which captures the sub-optimality of the learned $P$ during meta-training. Let $P = \Delta \tilde{P}$ for diagonal disturbance $\Delta \in \mathcal{D}$.
We define this uncertainty set as 
\begin{equation}
    \mathcal{D} = \{\Delta \in \R^{N\times N} : \Delta = \text{diag}(d),\,\, 0 < d_i \leq \overline{d}_i,\,\, d\in \mathbb R^{N}\}.
    \label{eq uncertainty set}
\end{equation}

Within this error model, we can establish conditions for stability in line with the previous subsections. We wish to establish \textit{robust stability} conditions, which guarantee the stability of the dynamical system for all realizations of the disturbance.
\begin{lemma}
\label{lem:robust_stability}
Let $A = I - \alpha H - P H$ with $P = \Delta \tilde{P}$. We will assume $\tilde{P}$ and $\Delta$ are simultaneously diagonalizable, $H$ symmetric positive definite, and $0 < \alpha < 2/\lambda_{\text{max}}(H)$. Then, $\rho(A) \leq 1$ for all $\Delta \in \mathcal{D}$ if
\begin{align}
    -\frac{\alpha}{\max_i \overline{d}_i} &\leq \lambda_{\text{min}}(\tilde{P}) \label{eq:lbound_rob}\\
    \lambda_{\text{max}}(\tilde{P}) &\leq \frac{1}{\max_i \overline{d}_i} \left(\frac{2}{\lambda_{\text{max}}(H)}-\alpha\right) \label{eq:ubound_rob}
\end{align}
\end{lemma}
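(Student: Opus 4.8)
The plan is to reduce Lemma~\ref{lem:robust_stability} to Theorem~\ref{thm:nom_stability} by controlling the spectrum of $P = \Delta\tilde{P}$ uniformly over the uncertainty set $\mathcal{D}$ of \eqref{eq uncertainty set}. First I would record the structural consequence of simultaneous diagonalizability: let $Q$ be invertible with $Q^{-1}\Delta Q$ and $Q^{-1}\tilde{P}Q$ both diagonal; then $Q^{-1}PQ = (Q^{-1}\Delta Q)(Q^{-1}\tilde{P}Q)$ is a product of diagonal matrices, hence diagonal, so $P$ is diagonalizable with real eigenvalues, and those eigenvalues are exactly the products $\delta_i\mu_i$, where $\delta_i = (Q^{-1}\Delta Q)_{ii}$ and $\mu_i = (Q^{-1}\tilde{P}Q)_{ii}$. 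Since $\Delta = \text{diag}(d)$, its eigenvalues $\delta_i$ are a permutation of the entries $d_i$, so $0 < \delta_i \le \overline{d}_{\max}$ where $\overline{d}_{\max} := \max_j \overline{d}_j$. With $H$ symmetric positive definite, $\alpha \ge 0$, and $P$ diagonalizable with real eigenvalues, Theorem~\ref{thm:nom_stability} applies to $A = I - \alpha H - PH$, so it remains to verify the two spectral conditions \eqref{eq:lbound_eig} and \eqref{eq:ubound_eig} for this $P$, for every $\Delta \in \mathcal{D}$.

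Next I would bound $\lambda_{\text{min}}(P)$ and $\lambda_{\text{max}}(P)$ by a sign case-split on each $\mu_i$. For the lower bound: if $\mu_i \ge 0$ then $\delta_i\mu_i \ge 0 > -\alpha$; if $\mu_i < 0$ then $\delta_i\mu_i \ge \overline{d}_{\max}\,\mu_i \ge \overline{d}_{\max}\,\lambda_{\text{min}}(\tilde{P}) \ge -\alpha$, the last step being hypothesis \eqref{eq:lbound_rob}. Hence $\lambda_{\text{min}}(P) = \min_i \delta_i\mu_i \ge -\alpha$, i.e. \eqref{eq:lbound_eig}. Symmetrically, for the upper bound: if $\mu_i \le 0$ then $\delta_i\mu_i \le 0 \le \tfrac{2}{\lambda_{\text{max}}(H)} - \alpha$, the right side being positive because $\alpha < 2/\lambda_{\text{max}}(H)$ by assumption; if $\mu_i > 0$ then $\delta_i\mu_i \le \overline{d}_{\max}\,\mu_i \le \overline{d}_{\max}\,\lambda_{\text{max}}(\tilde{P}) \le \tfrac{2}{\lambda_{\text{max}}(H)} - \alpha$ by \eqref{eq:ubound_rob}. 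Hence $\lambda_{\text{max}}(P) \le \tfrac{2}{\lambda_{\text{max}}(H)} - \alpha$, i.e. \eqref{eq:ubound_eig}. These bounds used only $0 < \delta_i \le \overline{d}_{\max}$, so they hold for every $\Delta \in \mathcal{D}$ at once, and Theorem~\ref{thm:nom_stability} then yields $\rho(A) \le 1$ for all $\Delta \in \mathcal{D}$, completing the argument.

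I do not anticipate a substantive obstacle: the content is the reduction to Theorem~\ref{thm:nom_stability} plus worst-case bookkeeping over $\mathcal{D}$. The step needing the most care is passing from the eigenvalues of $\Delta$ and $\tilde{P}$ to those of the product $P$; this is precisely where the simultaneous-diagonalizability hypothesis is essential (it fails for generic pairs), and it is also what collapses a matrix-valued disturbance into a coordinatewise scalar uncertainty $0 < \delta_i \le \overline{d}_{\max}$. A secondary point worth flagging is that only the magnitude $\overline{d}_{\max}$ of the largest admissible disturbance enters, not its location, so the conditions are conservative (sufficient, not necessary); and the strict inequality $\alpha < 2/\lambda_{\text{max}}(H)$ is exactly what keeps the right-hand side of \eqref{eq:ubound_rob} positive, mirroring the role of $\alpha \ge 0$ in Theorem~\ref{thm:nom_stability}.
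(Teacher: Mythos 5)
Your proposal is correct and follows essentially the same route as the paper's proof: substitute $P = \Delta\tilde{P}$ into the conditions of Theorem~\ref{thm:nom_stability}, use simultaneous diagonalizability to write the eigenvalues of $P$ as products $\delta_i\mu_i$ with $0 < \delta_i \le \max_j \overline{d}_j$, and case-split on the sign of the $\tilde{P}$ eigenvalues to verify \eqref{eq:lbound_eig} and \eqref{eq:ubound_eig} uniformly over $\mathcal{D}$. Your per-eigenvalue bookkeeping is slightly more careful than the paper's (which argues at the level of $\lambda_{\text{min}}(\tilde{P})$ and $\lambda_{\text{max}}(\tilde{P})$), but the substance is identical.
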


These results generally result in the tightening of the margin of stability. 
If we choose $\tilde{P} = P^* - \alpha I$, corresponding to our previously discussed cancellation of the nominal term, our dynamics become 
\begin{equation}
    \param_{t+1} = \param_t - (\alpha (I- \Delta) + \Delta P^*) \nabla_t,
\end{equation}
which harms the stability margins. This is well understood by taking $\Delta = (1-\epsilon) I $, 
for any adversarial $\epsilon \leq 1$. 
Then, we have update dynamics
\begin{equation}
    \param_{t+1} = \param_t - (\alpha \epsilon - (1-\epsilon) P^*) \nabla_t
\end{equation}
yielding dynamics matrix $A = I - \alpha \epsilon H - (1-\epsilon) P^* H$. Here, $\alpha \epsilon H$ is the excess term in the dynamics compared to if the nominal term had instead been set to 0. In this expression, $\epsilon$ adversarially perturbs the system toward instability, resulting in a potentially substantial performance drop. 

Instead, we can select $\alpha$ over the course of both inner and outer training via directly controlling the magnitude of $\alpha, P$. This approach corresponds to a hyperparameter controller, which has both been shown to enable automatic control of step sizes \cite{almeida2021generalizable} over the course of inner training. 

\subsection{Non-Markovian Optimizers Require Joint Stability}
\label{sec:memory}

We briefly discuss the role of non-Markovian (or hidden state) dynamics in learned optimizers. An extended discussion is in Appendix \ref{sec:nonmarkov}.
The role of momentum and stable hidden states has been investigated in detail in both works on optimization \cite{zhang2019algorithmic, cohen2021gradient} and learned optimization \cite{maheswaranathan2021reverse}. We summarize the key points below.

The core analysis change required is that we must consider the joint stability of the hidden state dynamics and the parameter dynamics together. Such analysis has been done in the case of Polyak momentum \cite{polyak1964some}, and has been found to yield less restrictive upper stability margins (thus allowing a larger step size). In the general case of blackbox optimizer dynamics, the stability properties of the hidden state update have been investigated in \cite{maheswaranathan2021reverse}, and we expand on this in Section \ref{sec:design}.

Momentum often accelerates convergence in the full-batch
optimization setting. However, it has the additional benefit of filtering stochastic gradients, yielding better expected descent. This filtering behavior is an important consideration in any learned optimizer operating in the stochastic setting. 

\section{Designing a Better Learned Optimizer}
\label{sec:design}

In this section, we present a set of regularization strategies and architectural modifications for learned optimizers that leverage the insights gleaned in the previous section. 
We refer to our optimizer as a \textit{stabilized through ample regularization} (STAR) learned optimizer\footnote{The code for our optimizer is available here: \url{https://github.com/google/learned_optimization/blob/main/learned_optimization/learned_optimizers/adafac_nominal.py}}. This section also presents a limited selection of experiments on in-meta-distribution and out-of-meta-distribution performance, with experimental details available in Appendix \ref{app:exp_details} and more results available in Appendix \ref{app:further_expts}. 


\subsection{New Design Features in the STAR Optimizer}

\textbf{Bias toward descent.} 
We add a nominal term, as described in Section \ref{sec:nomstab}. We focus on nominal terms based on Adam \cite{kingma2014adam} and AggMo \cite{lucas2018aggregated}, in part due to the input features to our blackbox optimizer containing momentum at different timescale (as in AggMo) and running exponential moving average (EMA) gradient magnitude estimates (as in Adam). We experimentally compare different nominal terms in the appendix. 

We add a magnitude controller on the nominal descent term, following the discussion in Section \ref{sec:robustness}. This magnitude controller consists of one additional output head with an exponential nonlinearity on the small MLP (requiring five additional weights). 
The combination of magnitude control applied to a nominal optimizer (such as Adam) makes our full nominal term equivalent to a hyperparameter-controller learned optimizer, albeit with a controller that leverages substantial computation reuse with the blackbox term. 

\textbf{Magnitude control via weight decay.} 
In order to discourage violations of the upper bound on stable eigenvalues in Section \ref{sec:nomstab}, we apply relatively heavy weight decay ($L_2$ regularization) in outer training to the parameters of the blackbox term. By directly controlling weight magnitudes (as opposed to controlling magnitudes via regularizing network outputs) we achieve magnitude regularization for arbitrary inputs (for reasonably-sized inputs) and thus achieve better generalization. 

\textbf{Preconditioner-style normalization.} 
As discussed, we use an adaptive inverse EMA preconditioner in our nominal term to better bias our model toward descent. As suggested by Section \ref{sec:precond}, we apply the same preconditioner to the output of the blackbox term as well as the network inputs.
Because these EMA terms are maintained as inputs to the network already, the expense of this transformation is only the cost of dividing the blackbox output by the preconditioner.

\textbf{Stable hidden states.} We discussed the importance of considering the stability of the combined parameter/hidden state dynamics in Section \ref{sec:memory}. It is critical to design the hidden state update dynamics to bias toward stability; indeed, this is a well-known fact in the study of recurrent networks in general \cite{miller2018stable, pascanu2013difficulty, jin1994absolute} and has been discussed in \cite{wichrowska2017learned, maheswaranathan2021reverse}. In this paper, as in previous works \cite{wichrowska2017learned, metz2022practical}, we use exponential moving average, momentum-style hidden states which are stable by design.

\subsection{Overview of the STAR Optimizer}

We apply these modifications to the \textit{small\_fc\_lopt} optimizer presented and open sourced in \cite{metz2022practical}. This optimizer is a small (197 weight) MLP which is applied elementwise---it takes inputs such as the parameter value, the gradient, and features such as gradient momentum at different timescales, and outputs an update to the parameter. 
This optimizer is applied in parallel to all parameters in the model being trained, and the only interaction between parameter updates is through tensor-level input features.
We refer to Appendix A of \cite{metz2022practical} for a complete explanation of the optimizer, but we review the basic details here. The optimizer is parameterized as 
\begin{equation}
    \label{eq:small_fc_lopt}
    f(z_t) = \beta_1 d_{\hparam}(z_t) \exp( \beta_2 m_{\hparam}(z_t) ).
\end{equation}
In this expression, $\beta_1, \beta_2$ are small constants which in \cite{metz2022practical} and here are set to $0.001$. The terms $d(z_t)$ and $m(z_t)$, corresponding to direction and magnitude terms respectively (so named because the magnitude term only positively scales the direction output), are heads of the neural network. 
The architecture of the optimizer is an MLP with two hidden layers, each with a width of four. This limits the number of total parameters to 197, yielding high computational efficiency versus most learned optimizers. 
The input features of the network are the parameter value, various parameter-wise momentum terms, EMA gradient norms estimates, several adafactor-based \cite{shazeer2018adafactor} features which aggregate tensor-level information, and a parameterization of the training step. 

Our modified update takes the form 
$
    f(z_t) = f_b(z_t) + f_g(z_t)
$
where $f_g(\cdot)$ is the nominal term and $f_b(\cdot)$ is the blackbox term. 
The nominal term is structured as 
\begin{equation}
    f_g(z_t) = \beta_1 \exp( \beta_2 m_g(z_t) ) g(z_t)
\end{equation}
where $m_g(\cdot)$ is a magnitude term for the nominal term and $g(z_t)$ is the nominal term; in our experiments we use a combined AggMo \cite{lucas2018aggregated} and Adam \cite{kingma2014adam} for this term. Critically, this term corresponds directly to a descent direction without being passed through a network, biasing the update toward descent.
The blackbox term is structured as 
\begin{equation}
    f_b(z_t) = \beta_3  \frac{d(z_t)}{v(z_t)} \exp( \beta_4 m_b(z_t) )
\end{equation}
where $v(z_t)$ is our specified preconditioner term and $m_b(\cdot)$ is the blackbox magnitude term. The difference between this parameterization and \eqref{eq:small_fc_lopt} is the normalization term. In these terms, $\beta_1, \beta_2, \beta_3, \beta_4$ are hand-specified constants, and the neural network has three output heads. The addition of the extra head as well as the multiplicative factor on that term adds an extra five parameters to the original 197-parameter optimizer, as well as one more for $\beta_1$ scaling the entire nominal term. Note that the addition of the $v(\cdot)$ term in the denominator of the blackbox term has magnitude comparable to the mean gradient magnitude, which requires choosing $\beta_3$ to account for this change in magnitude.

\subsection{The STAR Optimizer Improves Performance}
\label{sec:results}

\begin{figure}
    \centering
 \begin{overpic}[height=0.49\linewidth]{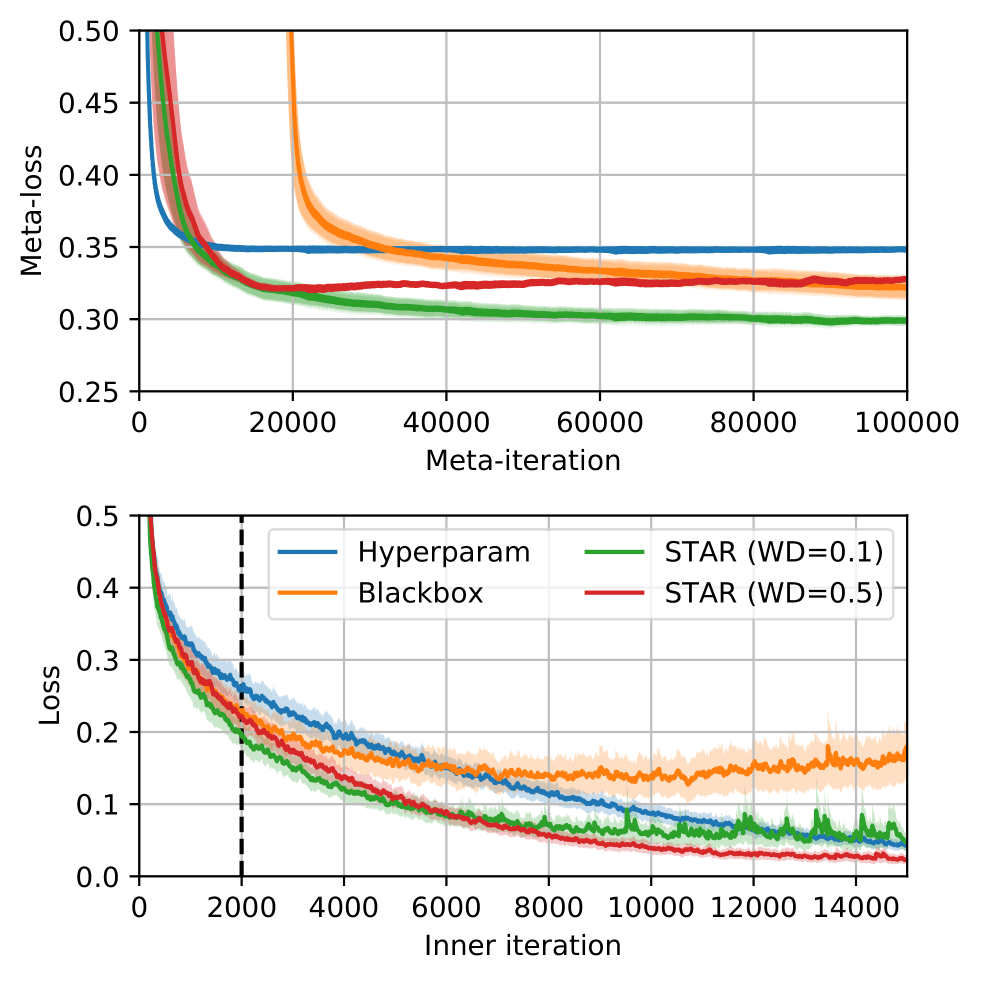}
 \put (3,53) {\textbf{\small(a)}}
 \put (3,4) {\textbf{\small(c)}}
 \put (50,101) {\makebox[0pt]{\footnotesize \textbf{MLP on Fashion MNIST}}}
  \end{overpic}
 \begin{overpic}[height=0.485\linewidth]{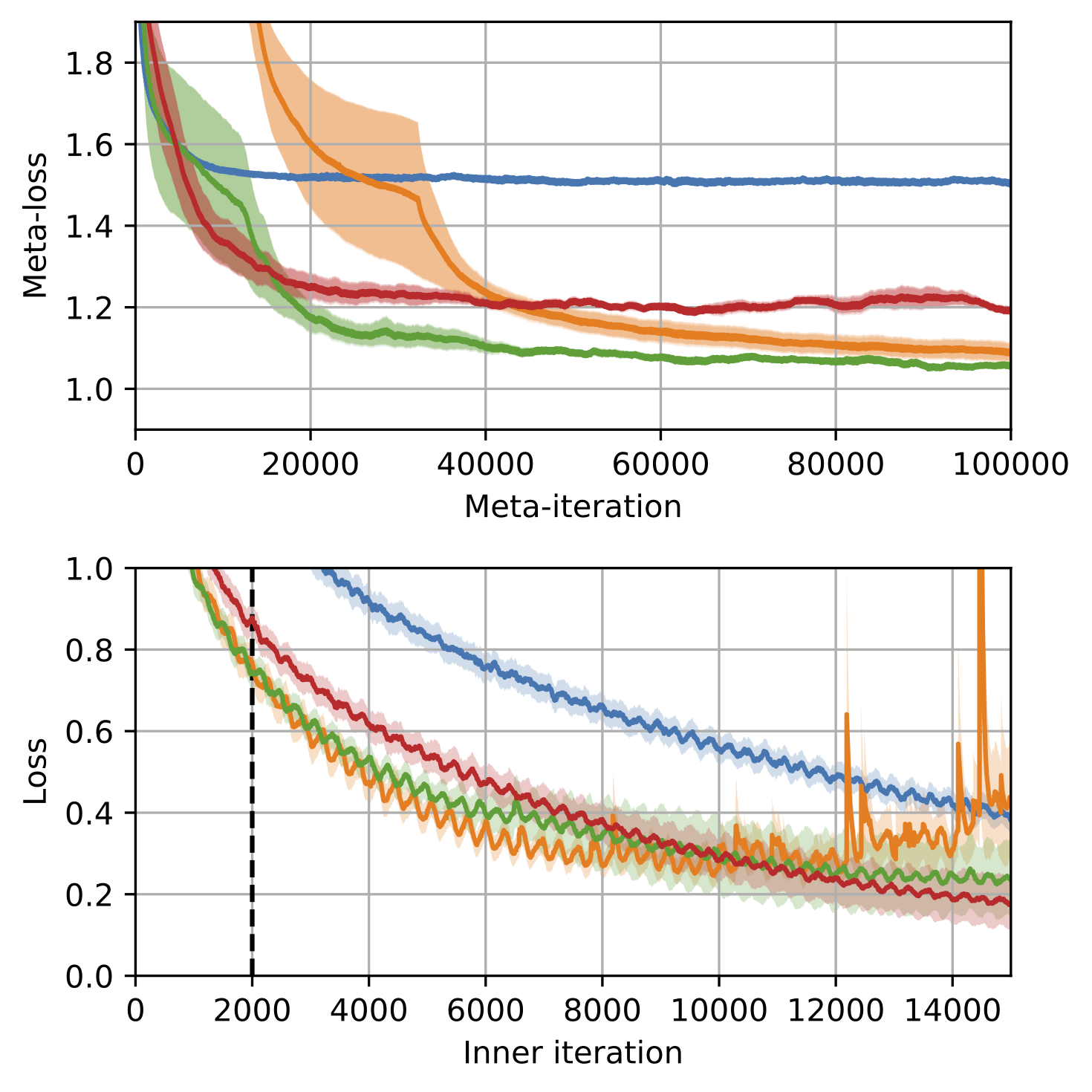}
 \put (3,53) {\textbf{\small(b)}}
 \put (3,4) {\textbf{\small(d)}}
 \put (50,101) {\makebox[0pt]{\footnotesize\textbf{CNN on CIFAR10}}}
 \end{overpic}
    \caption{\textbf{The STAR learned optimizer trains faster, achieves better fully-trained optimizers, and has better stability than prior learned optimizers.} 
    We visualize meta-training and inner training results for our STAR optimizer, at two different values of weight decay, as well as for a purely blackbox optimizer (the \textit{small\_fc\_lopt} optimizer from \citet{metz2022practical}) and a hyperparameter controller model. 
    The upper row shows meta-training curves for \textbf{(a)} a two hidden layer MLP on Fashion MNIST, and \textbf{(b)} a three layer CNN on CIFAR10. The lower row shows inner training curves for \textbf{(c)} Fashion MNIST, and \textbf{(d)} CIFAR10. 
    The meta-training objective consists of the average loss over the first 2000 inner iterations, with this horizon indicated by the
    black dashed line.
    The blackbox models diverge outside of their meta-training horizon. The STAR model, on the other hand, remains stable when applied for more steps than used during meta-training.
    } 
    \label{fig:main_results}
\end{figure}

As described and shown in Figure \ref{fig:main_results}, the STAR optimizer shows improved stability and faster meta-training than baselines, including the \textit{small\_fc\_lopt} optimizer from \citet{metz2022practical}, which is on the performance vs.~optimizer overhead Pareto frontier. We refer to \textit{small\_fc\_lopt} as ``Blackbox'' in our experiments, and emphasize that the STAR optimizer is architecturally identical to the blackbox optimizer other than the modifications documented in the previous subsection. We also compare to a hyperparameter-controller learned optimizer (``Hyperparam''). This optimizer consists of our nominal term with the magnitude control head, but removing the blackbox term. 
Experimental details are provided in Appendix \ref{app:exp_details} and in our open source code. Experiments on additional tasks, with other values of weight decay, ablations of the primary components of the STAR optimizer, and visualization of different random seeds as opposed to aggregated statistics, are provided in Appendix \ref{app:further_expts}. 

\begin{figure}
    \centering
 \begin{overpic}[width=\linewidth]{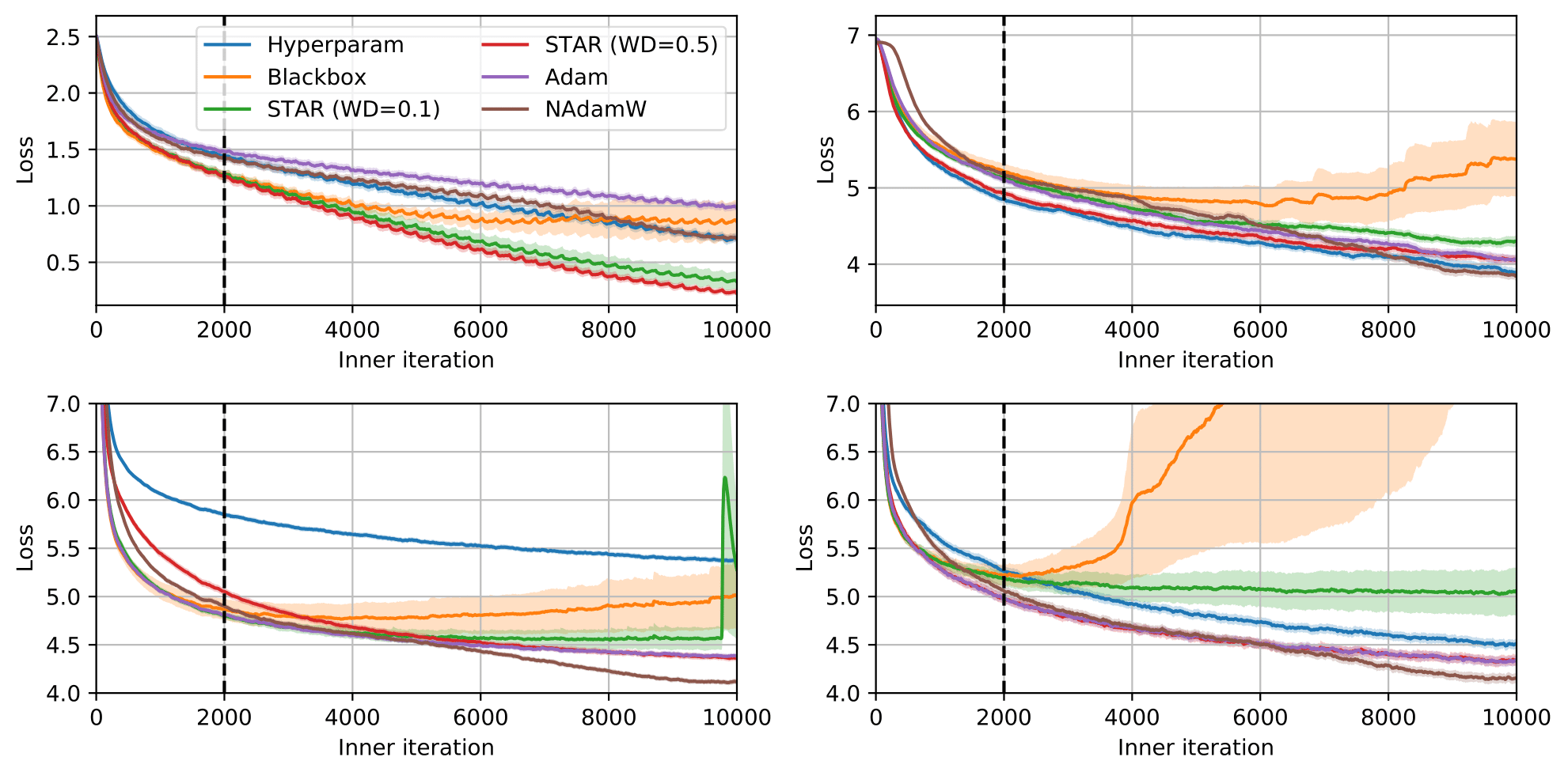}
 \put (1,26.5) {\textbf{\small(a)}}
 \put (1,1) {\textbf{\small(c)}}
 \put (51,26.5) {\textbf{\small(b)}}
 \put (51,1) {\textbf{\small(d)}}
  \end{overpic}
    \caption{\textbf{After only being meta-trained to optimize an MLP on Fashion MNIST for 2000 inner-iterations, the STAR learned optimizer is able to generalize to never before seen problems.} We show performance on the following tasks: \textbf{(a)} A 3 hidden layer MLP, with layer norm, trained on Cifar10. \textbf{(b)} A shallow Resnet-like \cite{he2016deep} model trained on 32x32 ImageNet \cite{deng2009imagenet}. \textbf{(c)} A 256 unit LSTM \cite{hochreiter1997long} language model trained on LM1B \cite{chelba2013one}. \textbf{(d)} A 5 layer, 256 hidden size decoder-only transformer (also on LM1B). In all cases, the blackbox optimizer diverges, while the STAR models, with appropriately chosen weight decay, continue to descend on the loss.}
    \label{fig:generalize_results}
\end{figure}

Most interestingly, the STAR optimizer continues to perform strongly even when optimizing for almost two orders of magnitude more steps than it was applied for during meta-training. It generalizes in this way to longer training runs without sacrificing performance on the meta-training loss. 
Divergence or poor performance outside of the meta-training setting (such as for the blackbox optimizer in Figure \ref{fig:main_results}d) is a primary limitation to the application of learned optimizers. In contrast, the STAR optimizer has controllable and reliable stability behavior outside of the meta-train setting. While meta-generalization is helped by training across varying settings (including varying the meta-train horizon and the underlying task) \cite{metz2020using}, this \textit{inherent} robustness---the ability of a model to generalize well to new tasks despite not being meta-trained with the explicit goal of generalization---is critical to achieving broadly applicable optimizers. We emphasize that the usual approach to meta-generalization---large-scale meta-training across many large tasks---is extremely expensive, typically requiring weeks of computation on millions of dollars of hardware. In contrast, we achieve dramatically improved stability fast and for free. We refer the reader to \citet{metz2022practical} for further baseline results (including heavily-tuned non-learned optimizers) on the tasks evaluated in Figure \ref{fig:main_results}. 

We further explore the inherent generalization abilities of the STAR optimizer in Figure \ref{fig:generalize_results}. We apply the optimizer trained on the first task (a small MLP applied to Fashion MNIST) to a wide variety of learning tasks including large models such as a transformer \cite{vaswani2017attention}. Remarkably, the STAR optimizer---which was trained in an extremely limited setting---generalizes well to different network sizes, nonlinearities, and datasets (including to language tasks). STAR nearly uniformly ourperforms the baseline blackbox model. The performance of STAR (for the correct choice of weight decay) is comparable to (and occasionally substantially better than, as in Figure \ref{fig:generalize_results}a) a hyperparameter-tuned Adam model. These tasks were selected from a subset of evaluation tasks. See the Appendix for learning curves for all generalization experiments.

\section{Discussion}
\label{sec:discussion}

This paper has addressed the role of stability and inductive biases in learned optimizers, and we have shown that incorporating stabilizing inductive biases results in strong performance both in-distribution and out-of-distribution. Much further work on injecting stability into learned optimizers remains to be done, and designing inductive biases for learned optimizers is potentially an exciting new line of work that straddles traditional optimization theory and the study of learned optimizers. There is a deep literature on the stability of general computation graphs that can, and should, be exploited to allow more fine-grained control of stability properties of optimizers  \cite{schoenholz2016deep, yang2017mean, martens2021rapid, gu2021efficiently, gu2020hippo}.

Informally, stabilization of the computation graph (as we have defined it) is a sufficient condition to guarantee that an optimizer moves downhill on the training loss landscape (in expectation). Our motivation from this comes from the convex setting, in which always moving downhill (with appropriate technical conditions) will lead to a global minimum. However, it is not clear if our induced biases are optimal for the training of neural networks, and in general, it is unclear the extent to which neural network training resembles convex optimization problems \cite{lee2019wide, fort2019large, fort2020deep}. Further study of which inductive biases are desirable for optimizing networks is necessary. Moreover, while we have shown strong generalization properties arise from our stability properties, there may be further desirable inductive biases that could be induced to improve generalization, such as for example biasing networks toward flat minima \cite{keskar2016large, chaudhari2019entropy}. 

\begin{ack}
We thank Daniel Freeman and Rohan Sinha for helpful conversations and comments in the development of this work. 
\end{ack}

\bibliography{cites}

\appendix

\setcounter{equation}{0}
\setcounter{figure}{0}
\setcounter{table}{0}

\renewcommand{\theequation}{A.\arabic{equation}}
\renewcommand{\thefigure}{App.\arabic{figure}}
\renewcommand{\thetable}{App.\arabic{table}}

\section{The Noisy Quadratic Setting: Additional Details}

In this section we extend our discussion of the noisy quadratic model (NQM). We first discuss stability in the NQM. We then provide proofs for the results in Section \ref{sec:nqm}. We also extend our discussion of robust stability and the stability of models with hidden states. 

\subsection{Stability}
\label{sec:stability}

In this subsection we expand on the short discussion of key stability results in the body of the paper. We will primarily discuss stability of the nominal system. This is motivated by a few results. 

\begin{lemma}
Consider the dynamical system defined by \eqref{eq:dynamics1} with all eigenvalues of $H$ finite. We write the expected total loss for horizon $T$ as $\totalloss(\hparam; T)$. Consider also the deterministic system given by \eqref{eq:dynamics1}, in which $\Sigma_{\noise} \to 0$. We will write the total loss under these dynamics as $\overline{\totalloss}(\hparam; T)$. Then,
\begin{equation}
    \overline{\totalloss}(\hparam; T) \leq \totalloss(\hparam; T)
\end{equation}
for all $T>0$.
\end{lemma}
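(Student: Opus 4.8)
The plan is to work directly from the closed-form expression \eqref{eq:totalloss} for the expected total loss and isolate the extra, manifestly nonnegative, terms that stochasticity contributes. First I would note that in the deterministic system the noise $\noise_t$ is identically zero, so the trajectory is simply $\param_t = A^t \param_0$; equivalently, setting $\Sigma_{\noise} \to 0$ in \eqref{eq:totalloss} forces $\Sigma_t = 0$ for every $t$ (since $\Sigma_t$, given by Equation \ref{eq quad opt}, is linear in $\Sigma_{\noise}$) and removes the $\tr(H\Sigma_{\noise})$ term, leaving
\begin{equation}
  \overline{\totalloss}(\hparam; T) = \frac{1}{T}\sum_{t=1}^T \param_0^\top (A^t)^\top H A^t \param_0 .
\end{equation}
Subtracting this from \eqref{eq:totalloss} gives
\begin{equation}
  \totalloss(\hparam; T) - \overline{\totalloss}(\hparam; T) = \frac{1}{T}\sum_{t=1}^T \tr\!\big( H(\Sigma_t + \Sigma_{\noise}) \big),
\end{equation}
so the claim reduces to showing that each summand is nonnegative.

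Next I would establish that $\Sigma_t + \Sigma_{\noise} \succeq 0$. The matrix $\Sigma_{\noise}$ is a covariance matrix, hence positive semidefinite; and each term in the sum defining $\Sigma_t$ has the form $M_k \Sigma_{\noise} M_k^\top$ with $M_k = A^{t-k-1}(I-A)$, which is positive semidefinite precisely because $\Sigma_{\noise}$ is. A finite sum of positive semidefinite matrices is positive semidefinite — and finiteness of $T$ together with finiteness of the eigenvalues of $H$ ensures all of these quantities are well-defined — so $\Sigma_t \succeq 0$ and therefore $\Sigma_t + \Sigma_{\noise} \succeq 0$.

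Finally I would invoke the standard fact that $\tr(HM) \geq 0$ whenever $H \succeq 0$ and $M \succeq 0$: writing $\tr(HM) = \tr(H^{1/2} M H^{1/2})$ and observing that $H^{1/2} M H^{1/2} \succeq 0$, its trace is a sum of nonnegative eigenvalues. Applying this with $M = \Sigma_t + \Sigma_{\noise}$ shows every summand above is nonnegative, hence the difference is nonnegative for all $T > 0$, which is the claim; in fact the argument is term-by-term in $t$, so the deterministic per-step loss is dominated by the stochastic per-step loss at every iteration. I do not anticipate a genuine obstacle here — the only points requiring a little care are making the $\Sigma_{\noise} \to 0$ limit precise (it collapses $\Sigma_t$ to zero by linearity) and noting that the inequality does not require $A$ to be stable, since $T$ is finite.
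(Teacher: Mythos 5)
Your proposal is correct and follows essentially the same route as the paper's own proof: decompose the expected loss via \eqref{eq:totalloss}, identify the deterministic loss as the $\Sigma_{\noise}\to 0$ part, and observe that the leftover $\frac{1}{T}\sum_{t=1}^T \tr\!\left(H(\Sigma_t+\Sigma_{\noise})\right)$ is nonnegative because $H$, $\Sigma_t$, and $\Sigma_{\noise}$ are all positive semidefinite. Your write-up is in fact slightly more careful than the paper's, since you justify $\Sigma_t\succeq 0$ from its sum-of-congruences form and spell out why $\tr(HM)\geq 0$ for PSD $H$ and $M$.
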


\begin{proof}
Note that 
\begin{equation}
    \overline{L}(\hparam; T) = \totalloss(\hparam; T) + \frac{1}{T}\sum_{t=1}^T \tr(H(\Sigma_t + \Sigma_{\noise})).
\end{equation}
Note that $H$ and $\Sigma_{\noise}$ are PSD, as is $\Sigma_t$. Thus, the trace term is strictly non-negative for all time.
\end{proof}

\begin{lemma}
Let $\rho(A) < 1$ with the total loss defined as previously (again with all eigenvalues of $H$ finite), and let $A$ be diagonalizable. Then,
\begin{equation}
    \totalloss(\hparam; T) < \infty
\end{equation}
for all $T > 0$.
\end{lemma}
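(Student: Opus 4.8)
The plan is to reduce everything to a geometric-series estimate generated by $\rho(A) < 1$. First, observe that for any \emph{finite} horizon $T$ the quantity $\totalloss(\hparam; T)$ in \eqref{eq:totalloss} is just a finite sum of quadratic forms and traces built from the fixed matrices $A, H, \Sigma_{\noise}$ (whose eigenvalues are finite) and the fixed vector $\param_0$, so finiteness is automatic; the substantive content — and what is needed to make sense of $\lim_{T\to\infty}\totalloss(\hparam;T)$ in the surrounding discussion — is that the bound is \emph{uniform in $T$}. So I would exhibit a constant $C$, independent of $T$, with $\totalloss(\hparam; T) \le C$.

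\emph{Step 1 (spectral bound on powers of $A$).} Since $A$ is diagonalizable, write $A = V\Lambda V^{-1}$ with $\Lambda$ diagonal. Then $A^t = V\Lambda^t V^{-1}$, and in operator norm $\|A^t\| \le \|V\|\,\|V^{-1}\|\,\rho(A)^t =: C_0\,\rho^t$ with $\rho := \rho(A) < 1$. This is precisely where diagonalizability is used: without it one gets instead a bound $p(t)\rho^t$ for some polynomial $p$, which is still summable but requires slightly more care (hence the hypothesis).

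\emph{Step 2 (deterministic term).} The first term of \eqref{eq:totalloss} is $\param_0^\top (A^t)^\top H A^t \param_0 = \|H^{1/2}A^t\param_0\|^2 \le \lambda_{\text{max}}(H)\,C_0^2\,\rho^{2t}\,\|\param_0\|^2$, so $\tfrac1T\sum_{t=1}^T \param_0^\top (A^t)^\top H A^t \param_0 \le \lambda_{\text{max}}(H)\,C_0^2\,\|\param_0\|^2 \sum_{t=1}^\infty \rho^{2t} = \lambda_{\text{max}}(H)\,C_0^2\,\|\param_0\|^2\,\tfrac{\rho^2}{1-\rho^2}$, finite because $\rho < 1$.

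\emph{Step 3 (noise covariance term).} Reindexing \eqref{eq quad opt} by $j = t-k-1$ gives $\Sigma_t = \sum_{j=0}^{t-1} A^j M (A^j)^\top$ with $M := (I-A)\Sigma_{\noise}(I-A)^\top \succeq 0$. Using $\tr(HB) \le \lambda_{\text{max}}(H)\tr(B)$ for $B \succeq 0$, together with $\tr(A^j M (A^j)^\top) \le \|A^j\|^2\tr(M) \le C_0^2 \rho^{2j}\tr(M)$, we get $\tr(H\Sigma_t) \le \lambda_{\text{max}}(H)\,C_0^2\,\tr(M)\sum_{j=0}^\infty \rho^{2j} = \lambda_{\text{max}}(H)\,C_0^2\,\tr(M)/(1-\rho^2)$, a bound independent of $t$; equivalently $\Sigma_t \preceq \Sigma_\infty$ where $\Sigma_\infty$ solves the discrete Lyapunov equation $\Sigma_\infty = A\Sigma_\infty A^\top + M$. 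The leftover term $\tr(H\Sigma_{\noise})$ is a fixed finite constant. Averaging both over $t = 1,\dots,T$ therefore stays bounded uniformly in $T$. Summing the three uniform bounds from Steps 2–3 yields the constant $C$, which proves the claim.

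The only parts worth spelling out carefully are Step 1 (the role of diagonalizability in getting the clean exponential bound $\|A^t\| \le C_0\rho^t$) and the PSD trace bookkeeping in Step 3 — in particular recognizing $\tr(H\Sigma_t)$ as a truncation of the convergent Lyapunov solution so that it is bounded \emph{uniformly} in $t$, not merely finite for each $t$. Everything else is a geometric series, so I expect Step 3 to be the main (mild) obstacle.
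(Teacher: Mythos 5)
Your proof is correct and follows essentially the same route as the paper's: diagonalize $A$, use $\rho(A)<1$ to get a summable geometric decay of $A^t$, and bound the deterministic term and the $\tr(H\Sigma_t)$ noise term separately by geometric series. If anything, your Step 1 bound $\|A^t\|\le \|V\|\,\|V^{-1}\|\,\rho(A)^t$ is slightly more careful than the paper's corresponding step (which identifies $\|A^k\|_2$ with $\rho(A^k)$, valid only for normal $A$), so no changes are needed.
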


\begin{proof}
First, note that $\totalloss(\hparam; T) \leq \totalloss(\hparam; T+1)$ for all $T$ as both terms in the sum are non-negative. Thus, it suffices to show $\lim_{T\to\infty} \totalloss(\hparam; T) < \infty$. 
We will address the two terms in the sum in \eqref{eq:totalloss} in order. First, note that 
\begin{equation}
    A^t = V \Lambda^t V^{-1} 
\end{equation}
where $V$ is a matrix with columns corresponding to eigenvectors of $A$ and $\Lambda$ is a diagonal matrix containing the eigenvalues of $A$. We will write the stacked eigenvalues of $A$ as $\lambda\in\R^N$. Then, we can rewrite the first loss term as 
\begin{align}
    \param_0^\top (A^t)^\top H A^t \param_0 &=  z_0^\top (\Lambda^t)^\top V^{\top} H V \Lambda^t z_0\\
    &= (\lambda^t)^\top Z^\top V^{\top} H V Z \lambda^t
\end{align}
where $z_0 = V^{-1} \param_0$, $Z = \text{diag}(z_0)$, and where the second line uses the fact that $\text{diag}(z) \lambda = \text{diag}(\lambda) z$. Thus, we have 
\begin{equation}
    (\lambda^t)^\top Z^\top V^\top H V Z \lambda^t \leq \|Z^\top V^{\top} H V Z\|_2 \sum_i |\lambda^{t}_i|^2.
\end{equation}
Note that $|\lambda_i^t| = |\lambda_i|^t$. We will exchange the sums over $i$ and $t$ and note that, because $|\lambda_i| < 1$ for all $i$,
\begin{equation}
    \sum_{t=0}^\infty |\lambda_i|^{2t} = \frac{1}{1-|\lambda_i|^2}
\end{equation}
for each $i$, and thus the sum over $i$ is finite. Thus, we have shown the first term in the loss is finite---not including the factor of $1/T$. As $T\to\infty$, considering the factor of $1/T$, this first term vanishes. 

We now have to bound the second term in the sum over time in \eqref{eq:totalloss}. Note that 
\begin{equation}
    \frac{1}{T}\sum_{t=1}^T\tr(H (\Sigma_t + \Sigma_{\noise})) = \frac{1}{T}\sum_{t=1}^T\tr(H \Sigma_t) + \tr(H \Sigma_{\noise})
\end{equation}
and $0 \leq \Sigma_t$, as it is PSD and symmetric by construction and thus this term in the loss is non-negative. 

We now need to construct an upper bound.
Note that showing the boundedness of the two-norm of $\Sigma_t$ implies the trace is also bounded. Moreover, we have 
\begin{align}
    \|\Sigma_t\|_2 &= \|\sum_{k=0}^{t-1} A^{t-k-1} (I-A) \Sigma_{\noise} (I-A)^\top (A^{t-k-1} )^\top \|_2\\
    &\leq \sum_{k=0}^{t-1} \| A^{t-k-1} (I-A) \Sigma_{\noise} (I-A)^\top (A^{t-k-1} )^\top \|\\
    &\leq \sum_{k=0}^{t-1} \| A^{t-k-1}\|_2^2 \| I-A\|_2^2 \| \Sigma_{\noise}\|_2 \\
    &= \| I-A\|_2^2 \| \Sigma_{\noise}\|_2 \sum_{k=0}^{t-1} \| A^{t-k-1}\|_2^2
\end{align}
and thus it is sufficient to show the boundedness of the sum on the right hand side. Note that $\| A^{t-k-1}\| \geq 0$ for all $t,k$, and thus
\begin{equation}
    \sum_{k=0}^{t-1} \| A^{t-k-1}\|_2^2 \leq \sum_{k=0}^{\infty} \| A^{k}\|_2^2 = \sum_{k=0}^{\infty} \rho(A^k)^2
\end{equation}
Because $\rho(A) < 1$, we have $\rho(A^k)<1$; this yields boundedness of the infinite sum by following the same arguments as the first part of the proof.
\end{proof}

We have thus shown that stability of the nominal model implies non-divergence of the stochastic model, and that the deterministic model provides a lower bound on the performance of the stochastic model. So, based on the above results, instability of the deterministic model (in which the total cost goes to infinity \cite{anderson2007optimal} in the infinite horizon) implies infinite cost for the stochastic model and stability of the deterministic model implies convergent (non-infinite) cost for the stochastic model. Thus, we discuss the behavior of the deterministic system as a proxy objective for the behavior of the stochastic system. 

\subsection{Proofs from Section 4}
\label{sec:sec4_proofs}

\begin{proof}[Proof of Theorem \ref{thm:nom_stability}]
First, note that \eqref{eq:lbound_eig} corresponds to positive semi-definiteness of $\alpha I + P$. Due to this, as well as $H>0$, we have $\lambda_i((\alpha I + P) H)$ real and non-negative for all $i$. Thus,
\begin{align}
    \rho(A) = \max_i |\lambda_i(A)| = \max_i |I - \lambda_i((\alpha I + P) H)| \leq 1
\end{align} 
if $\lambda_i((\alpha I + P) H) \leq 2$ for all $i$. Due to the positive (semi-)definiteness of $\alpha I + P$ and $H$, we have 
\begin{equation}
    \lambda_{\text{max}}((\alpha I + P) H) \leq \lambda_{\text{max}}(\alpha I + P)  \lambda_{\text{max}}(H)
\end{equation}
and thus 
\begin{equation}
    \lambda_{\text{max}}(\alpha I + P) \leq \frac{2}{\lambda_{\text{max}}(H)}
\end{equation}
ensures \eqref{eq:ubound_eig}.
Due to the positivity of $\alpha$, we have 
\begin{equation}
    \lambda_{\text{max}}(\alpha I + P) = \alpha + \lambda_{\text{max}}(P)
\end{equation}
which gives our desired result.
\end{proof}

\begin{proof}[Proof of Lemma \ref{lem:precond_stability}]

When we combine the Hessian preconditioner $H^{-\frac{1}{2}}$ together with our preconditioner $P$ and nominal step size $\alpha$, the resulting dynamics are 
\begin{equation}
    \param_{t+1} = \param_t - H^{-\frac{1}{2}} (\alpha I + P) \nabla_t.
\end{equation}
which yields a nominal term $-\alpha H^{-\frac{1}{2}} \nabla_t$. The second term is 
\begin{equation}
    H^{-\frac{1}{2}} P \nabla_t = H^{-\frac{1}{2}} P H^{\frac{1}{2}} H^{-\frac{1}{2}} \nabla_t
\end{equation}
where, note, $H^{-\frac{1}{2}} P H^{\frac{1}{2}}$ is a similarity transformation of $P$ due to the positive definiteness of $H$, and thus
\begin{equation}
    \param_{t+1} = \param_{t} - (\alpha I + H^{-\frac{1}{2}} P H^{\frac{1}{2}}) H^{-\frac{1}{2}} \nabla_t
\end{equation}
with $\lambda_i(\alpha I + H^{-\frac{1}{2}} P H^{\frac{1}{2}})  = \alpha + \lambda_i(P)$.
Plugging in for $\nabla_t$, we have $H^{-\frac{1}{2}} \nabla_t = H^{\frac{1}{2}} (\param_t-\noise_t)$ which yields $\lambda_i(H^{\frac{1}{2}}) = \sqrt{\lambda_i(H)}$, given these results and following the bound on the maximum eigenvalue in Theorem \ref{thm:nom_stability} proves the result. 
\end{proof}

\begin{proof}[Proof of Lemma \ref{lem:robust_stability}]
Substituting for $P$ in Theorem \ref{thm:nom_stability}, we have 
\begin{align}
     -\alpha &\leq \lambda_{\text{min}}(\Delta \tilde{P})\\
     \lambda_{\text{max}}(\Delta \tilde{P}) &\leq \frac{2}{\lambda_{\text{max}}(H)} - \alpha.
\end{align}
We have assumed $\Delta$ and $\tilde{P}$ are simultaneously diagonalizable, and thus 
\begin{align}
    \text{diag}(\lambda(\Delta \tilde{P})) &= V \Delta \tilde{P} V^{-1}\\
    &= V \Delta V^{-1} V \tilde{P} V^{-1}
\end{align}
for eigenvector matrices $V$, and thus $\lambda_i(\Delta \tilde{P}) = \lambda_j(\Delta)$ $\lambda_k(\tilde{P})$ for all $i$ and some $j,k$. Note that because $\Delta$ is diagonal, the eigenvalues correspond to the diagonal entries. 

We will consider the minimum eigenvalue bound first. Consider two cases. First, we will consider the $\lambda_{\text{min}}(\tilde{P}) \geq 0$ case. As we know $\alpha > 0$ and $\lambda_j(\Delta) > 0$ for all $j$, this inequality is always satisfied. In the second case, $\lambda_{\text{min}}(\tilde{P}) < 0$, the right hand side has
\begin{align}
    \lambda_{\text{min}}(\Delta \tilde{P}) &\geq \lambda_{\text{max}}(\Delta) \lambda_{\text{min}}(\tilde{P})\\
    &=\max_i \overline{d}_i \lambda_{\text{min}}(\tilde{P})
\end{align}
Thus, to summarize, we have shown the above lower bounds the minimum eigenvalue of $\Delta \tilde{P}$ and thus satisfying it results in $\alpha + \lambda_i(\Delta \tilde{P}) \geq 0$ for all $i$.

We will now plug in to \eqref{eq:ubound_eig} with $P = \Delta \tilde{P}$. Note that, based on the simultaneous diagonalizability, $\lambda_i(\alpha I + \Delta \tilde{P}) = \alpha + \lambda_j(\Delta)$ $\lambda_k(\tilde{P})$. Again, we consider two cases. If $\lambda_{\text{min}}(\tilde{P}) \leq 0$, then the bound is immediately satisfied. Thus, the largest bound is provided by 
\begin{align}
    \lambda_{\text{max}}(\Delta \tilde{P}) &\leq \lambda_{\text{max}}(\Delta) \lambda_{\text{max}}(\tilde{P})\\
    &=\max_i \overline{d}_i \lambda_{\text{max}}(\tilde{P})
\end{align}
and plugging this into \eqref{eq:ubound_eig} yields the desired result. 
\end{proof}

\subsection{Robust Stability}

Our robust stability analysis in Section \ref{sec:robustness} relies on a diagonal multiplicative error model, with close connections to D-stability \cite{johnson1974sufficient}. There are many possible robustness conditions and forms of stability analysis within neural network models, most of which originate in the control theory and nonlinear dynamics communities \cite{cao2005global, zhang2014comprehensive}. We highlight that a purely additive model (as opposed to the multiplicative model we use) would not necessarily result in the multiplicative factor in \eqref{eq:lbound_rob} and \eqref{eq:ubound_rob}, and instead would result in an additive factor. Moreover, a term of the form $\epsilon\alpha H$ (as appears in our dynamics with cancellation) would not necessarily be present for an additive error model. Fundamentally, however, we believe our multiplicative error model is a reasonable one, with a combined multiplicative/additive model also being reasonable and general, and resulting in similar stability bounds. 

\subsection{Non-Markovian Optimizers}
\label{sec:nonmarkov}

Analysis of the stability of optimizers with momentum shows that the addition of momentum improves stability margins \cite{cohen2021gradient} and helps counteract noise \cite{defazio2020momentum}. We refer the reader to \cite{cohen2021gradient}, Appendix B for an analysis of stability margins for Polyak and Nesterov momentum, and \cite{goh2017momentum} for a comprehensive exploration of the dynamics of Polyak-style momentum. This improvement in the stability margin is intuitive: if one considers the marginal stability case in which exact oscillation occurs on the sharpest eigenmode, momentum will act in the downhill direction at each timestep (for reasonably chosen hyperparameters) and move the system into a strictly stable regime. However, we emphasize that stability of the combined system requires analysis of the second order (double integrator) system induced by the interaction of the momentum hyperparameters with the stepsize parameters. This analysis is critical in the case of more expressive learned optimizers which may have recurrence. We believe this case---which is computationally and analytically challenging---is a primary direction of future work.

\section{Experimental Details}
\label{app:exp_details}

In this section we provide details of the experiments featured in the body of the paper. 
Our experiments use JAX \cite{jax2018github}. For all experiments we use PES \cite{vicol2021unbiased} with a truncation length of $50$, with a standard deviation of $0.01$. We always target the mean training loss (across all timesteps of the inner optimization). In all experiments we use AdamW \cite{loshchilov2017decoupled} as our meta-optimizer; this is identical to Adam other than how the weight decay is handled. We use zero weight decay except when explicitly stated otherwise, and apply gradient clipping of $1.0$.

\subsection{Noisy Quadratic Model}
\label{sec:nqm_expt_details}

In our NQM experiments, we considered a two-dimensional quadratic program in which 
\begin{equation}
    H = \begin{bmatrix}
    1.11 & 0.596\\
    0.596 & 0.486
    \end{bmatrix},
\end{equation}
with the initial state $\param_0 \sim \N(0, 10I)$ and the noise $\noise_t \sim \N(0, I)$ for all $t$. The curves in Figure \ref{fig:NQM}a correspond to values 
\begin{equation}
    T \in \{1, 2, 3, 4, 5, 10, 25, 50, 100, 150, 200, 250, 300, 400, 500, 600, 700, 800, 900, 1000\}.
\end{equation}
The plot visualizes the trace of the empirical gradient variance matrix
\begin{equation}
    \hat{\Sigma} = \frac{1}{N}\sum_{i=1}^N (\nabla_i - \bar{\nabla}) (\nabla_i - \bar{\nabla})^\top
\end{equation}
where $\nabla_i$ is sample $i$ (with respect to the noise instantiation) of the gradient of the loss with respect to the parameters and 
\begin{equation}
    \bar{\nabla} = \frac{1}{N}\sum_{i=1}^N \nabla_i.
\end{equation}
The figure is plotted for $50$ values of $\alpha$ (with $P=0$) at $500$ seeds per sample. 

In Figures $\ref{fig:NQM}$b,c we consider a total episode length of $T=50$. In this case, PES reduces to simple ES \cite{metz2019understanding}. Plots show the mean over five seeds and the one standard error confidence intervals. In Figure $\ref{fig:NQM}$b, the dotted line corresponds to the larger eigenvalues and the solid line corresponds to the smaller one. In Figures $\ref{fig:NQM}$b,c we apply EMA smoothing with coefficient $0.95$ to each curve (including the confidence intervals). 

\subsection{In-Distribution Experiments}

The two tasks considered for our in-distribution experiments are an MLP applied to Fashion MNIST and a ConvNet applied to CIFAR10. 

The MLP model consists of two hidden layers with 128 hidden units each and ReLU activations. The ConvNet consists of three hidden layers with $3\times3$ kernels and ReLU activations. The layers of 32, 64, and 64 units respectively and strides of 2, 1, and 1 respectively. The conv layers are then followed by a fully connected layer. For both tasks, batch sizes of 128 are used. For more details on the tasks, we refer the reader to \cite{metz2022practical}. 

In our experiments, the ``Hyperparam'' optimizer corresponds to our architecture in which the blackbox output head is removed, and thus the optimizer corresponds to the nominal term with the magnitude control head. The ``Blackbox'' optimizer corresponds to removing the nominal gradient term, which also corresponds to the \textit{small\_fc\_lopt} optimizer of \cite{metz2022practical}. In our optimizers we use weight decay with AdamW, where the weight decay is specified as a multiplier on the outer learning rate. We use an outer learning rate of $10^{-4}$ for all of our experiments, for both tasks. The training horizon for the tasks are $2000$ steps. In all of our experiments, we set $\beta_i =10^{-3}$ for all $i$. In all plots, we apply EMA smoothing with a coefficient of $0.98$.

\subsection{Out-of-Distribution Experiments}

We apply our optimizer, trained on the MLP/Fashion MNIST task, to a wide variety of other ML tasks. In particular, we apply it to 16 different tasks that span a wide range of architectures and datasets, from relatively similar to the training task---larger MLPs on CIFAR10, for example---to radically different tasks such as training models with ES gradients, recurrent models such as LSTMs, and transformers. These tasks are taken from the \texttt{learned\_optimization} package \cite{metz2022practical}, which is open source\footnote{\url{https://github.com/google/learned_optimization}}.
We add two baselines on these tasks: an Adam model and an NAdamW (Nesterov-accelerated AdamW) model. These baselines were also included in \cite{metz2022practical}, and the tasks are all available in the learned optimization package. We note that the learning rate of the Adam model has been tuned for each task via a coarse search (half orders of magnitude steps), while the NAdamW model has been tuned for each task via random search. In particular, 1000 random hyperparameter values were evaluated for each task. Thus, the Adam baseline represents a reasonable baseline, comparable to standard training practice, while the NAdamW baseline represents an extremely strong baseline that is unachievable in typical hyperparameter tuning. 

\section{Further Experimental Results}
\label{app:further_expts}

In this section we present further experimental results which extend and complement the results from the body of the paper. The details of these experiments are the same as previous unless otherwise stated. 

\subsection{Alternative Views of Main Results}

Figure \ref{fig:seeds} shows individual seeds underlying Figure \ref{fig:main_results}. There are two things to note about these figures. First, seeds which perform poorly often perform very poorly, diverging or oscillating around losses much worse than other seeds. This is to be expected from dynamical systems run for very long horizons, in which small amounts of instability are amplified over long timeframes. In spite of this, the results in Figure \ref{fig:seeds} validate our results even if outliers are ignored. 

\subsection{Ablations}

Figures \ref{fig:wd_ablation}, \ref{fig:precon_ablation}, and \ref{fig:nom_ablation} show ablations of the various components of the STAR optimizer. Figure \ref{fig:wd_ablation} shows other values of weight decay in outer training. For values between 0.1 and 0.5 (times the learning rate) as we use in our experiments, a good tradeoff between stabilizing behavior outside of the training horizon and avoid over-damping the dynamics (as in the case of $1.0$ weight decay) is achieved. We note that the poor training performance of $0.0$ weight decay in Figure \ref{fig:wd_ablation}b is due to an outlier that did not converge well in meta-training and low weight decay typically does not substantially harm meta-training.

Figure \ref{fig:precon_ablation} shows the impact of the Adam-style normalizing preconditioner. Generally, better meta-losses are achieved with the preconditioning compared to without; it is less clear what the benefit for generalization is, with performance of with/without the preconditioner being mixed. Generally, due to the strong improvements to in-task performance and limited degredation in out-of-distribution performance, we advocate for this term to be included.

Figure \ref{fig:nom_ablation} shows the impact of removing the nominal term. Generally, without the nominal term, meta-training is slower and converges to worse solutions. Interestingly, for the final trained optimizer, performance on the Fashion MNIST MLP seems comparable between all models; this is very much not the case for the CIFAR model. Generally, we have found performance at all points in meta-training and generalization to be uniformly improved by the addition on nominal terms, and strongly endorse the inclusion of these terms. 

\begin{figure}[t]
  \centering
 \begin{overpic}[height=0.49\linewidth]{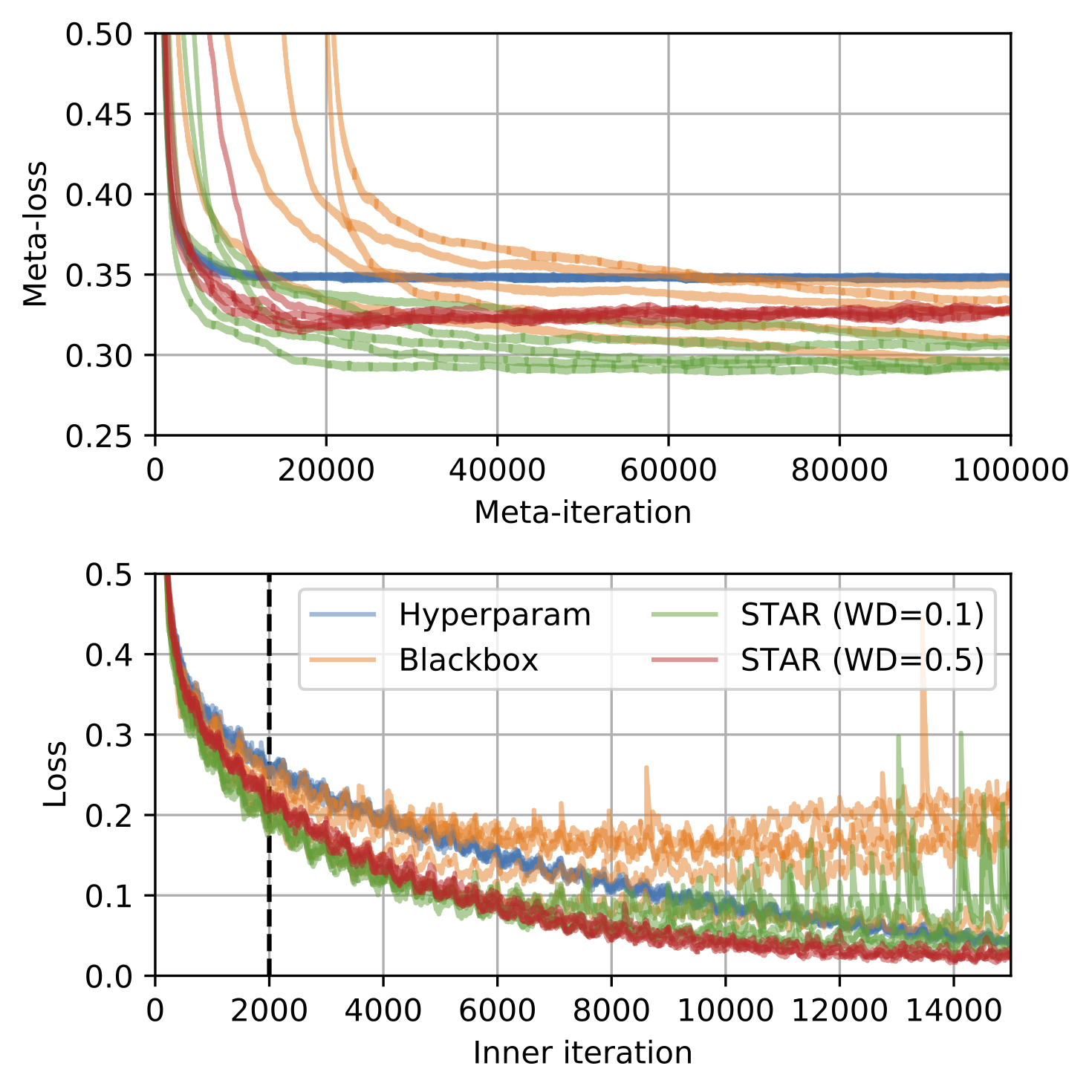}
 \put (3,53) {\textbf{\small(a)}}
 \put (3,4) {\textbf{\small(c)}}
 \put (50,101) {\makebox[0pt]{\footnotesize \textbf{MLP on Fashion MNIST}}}
  \end{overpic}
 \begin{overpic}[height=0.485\linewidth]{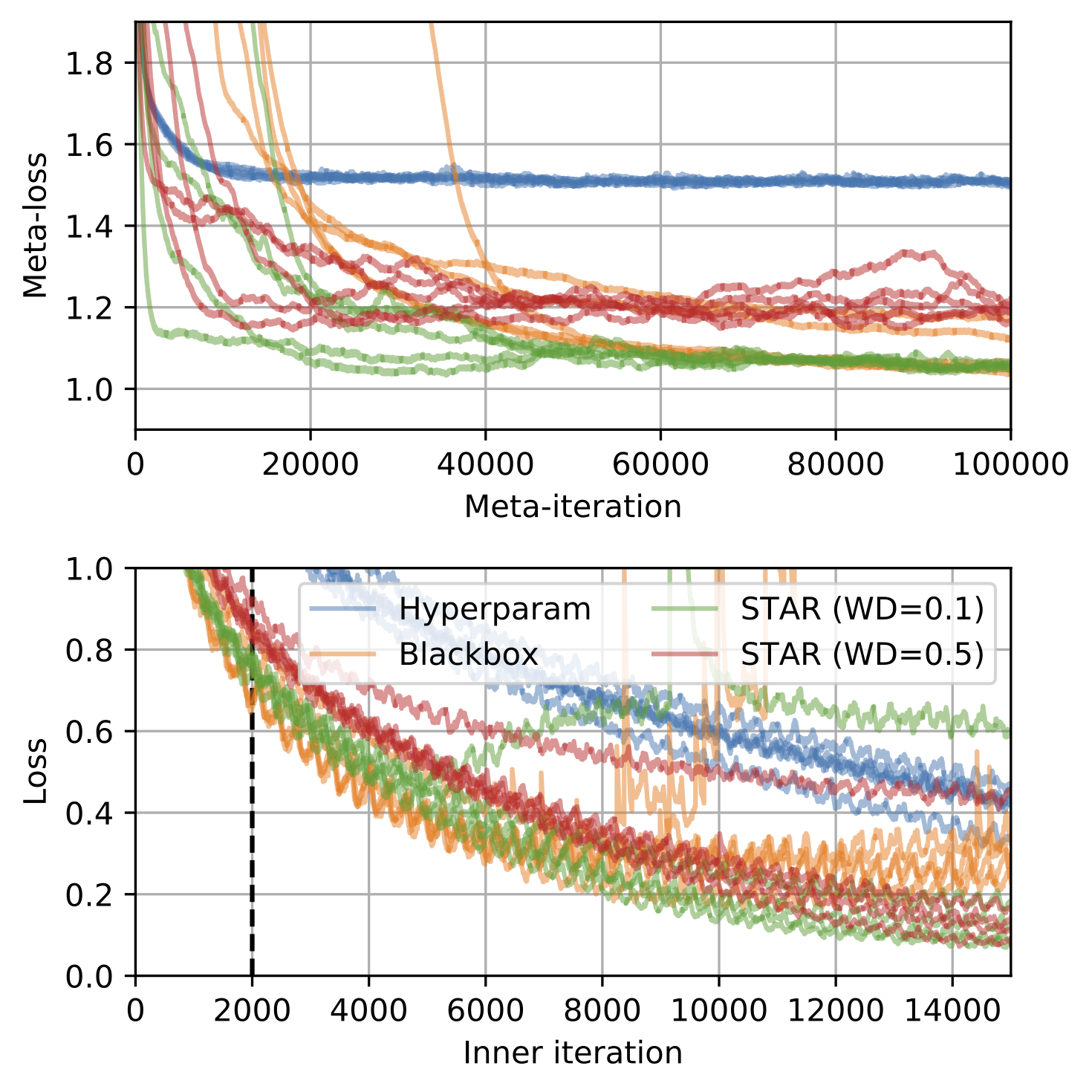}
 \put (3,53) {\textbf{\small(b)}}
 \put (3,4) {\textbf{\small(d)}}
 \put (50,101) {\makebox[0pt]{\footnotesize\textbf{CNN on CIFAR10}}}
  \end{overpic}
    \caption{Visualization of the individual seeds from Figure \ref{fig:main_results}.} 
    \label{fig:seeds}
\end{figure}

\begin{figure}[t]
  \centering
 \begin{overpic}[height=0.49\linewidth]{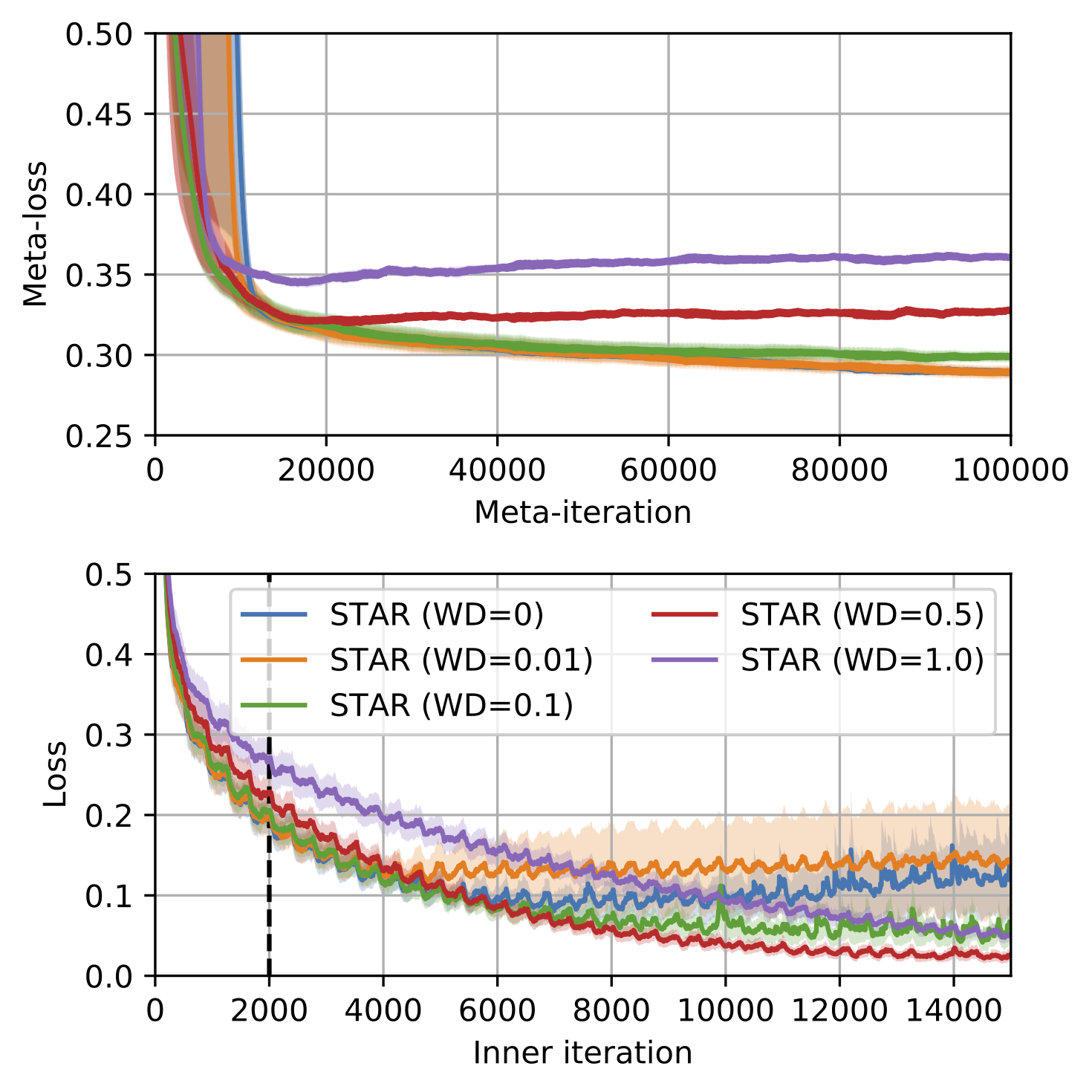}
 \put (3,53) {\textbf{\small(a)}}
 \put (3,4) {\textbf{\small(c)}}
 \put (50,101) {\makebox[0pt]{\footnotesize \textbf{MLP on Fashion MNIST}}}
  \end{overpic}
 \begin{overpic}[height=0.485\linewidth]{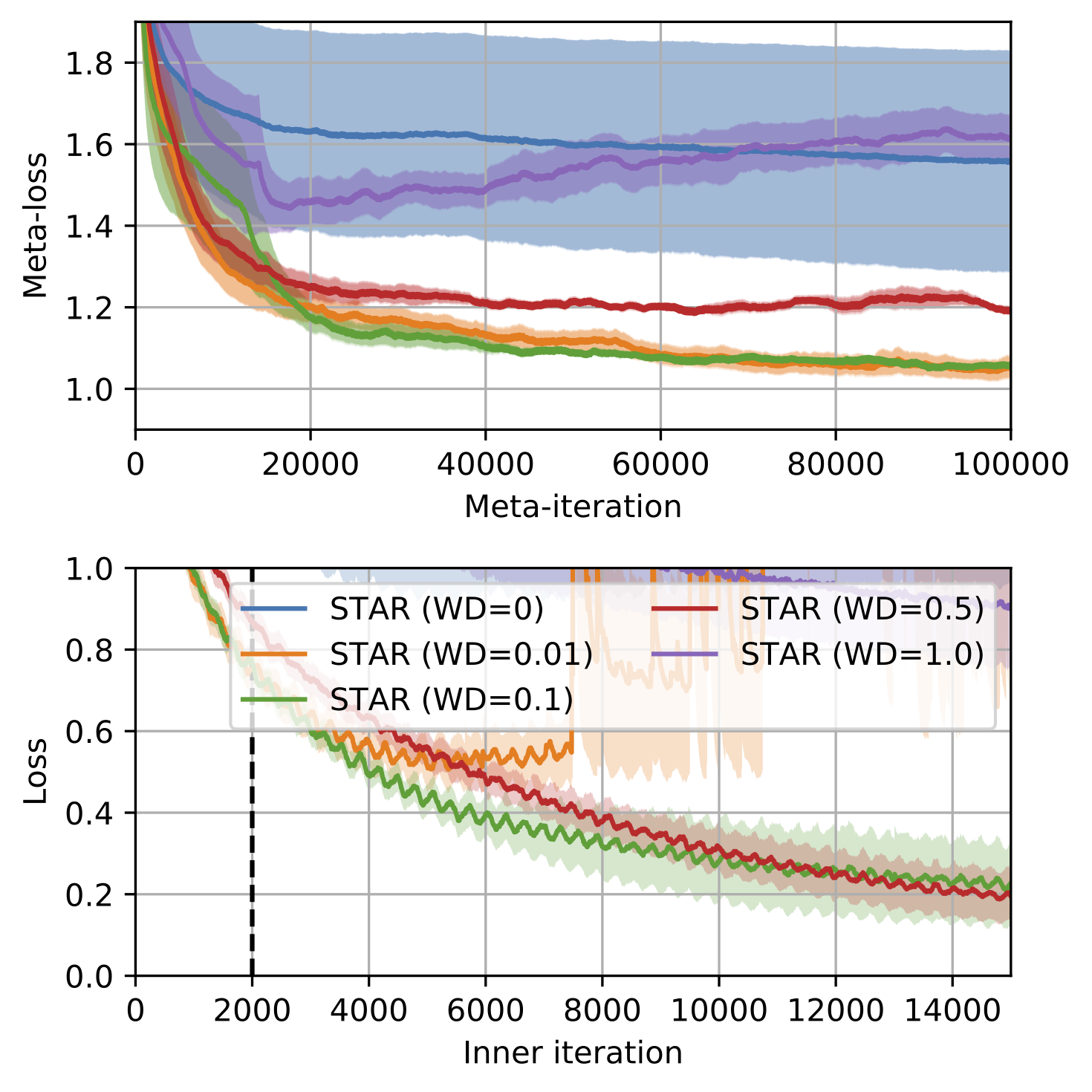}
 \put (3,53) {\textbf{\small(b)}}
 \put (3,4) {\textbf{\small(d)}}
 \put (50,101) {\makebox[0pt]{\footnotesize\textbf{CNN on CIFAR10}}}
  \end{overpic}
    \caption{Comparison of the impact of weight decay value for the STAR optimizer.} 
    \label{fig:wd_ablation}
\end{figure}

\begin{figure}[t]
\vspace{10mm}
  \centering
 \begin{overpic}[height=0.49\linewidth]{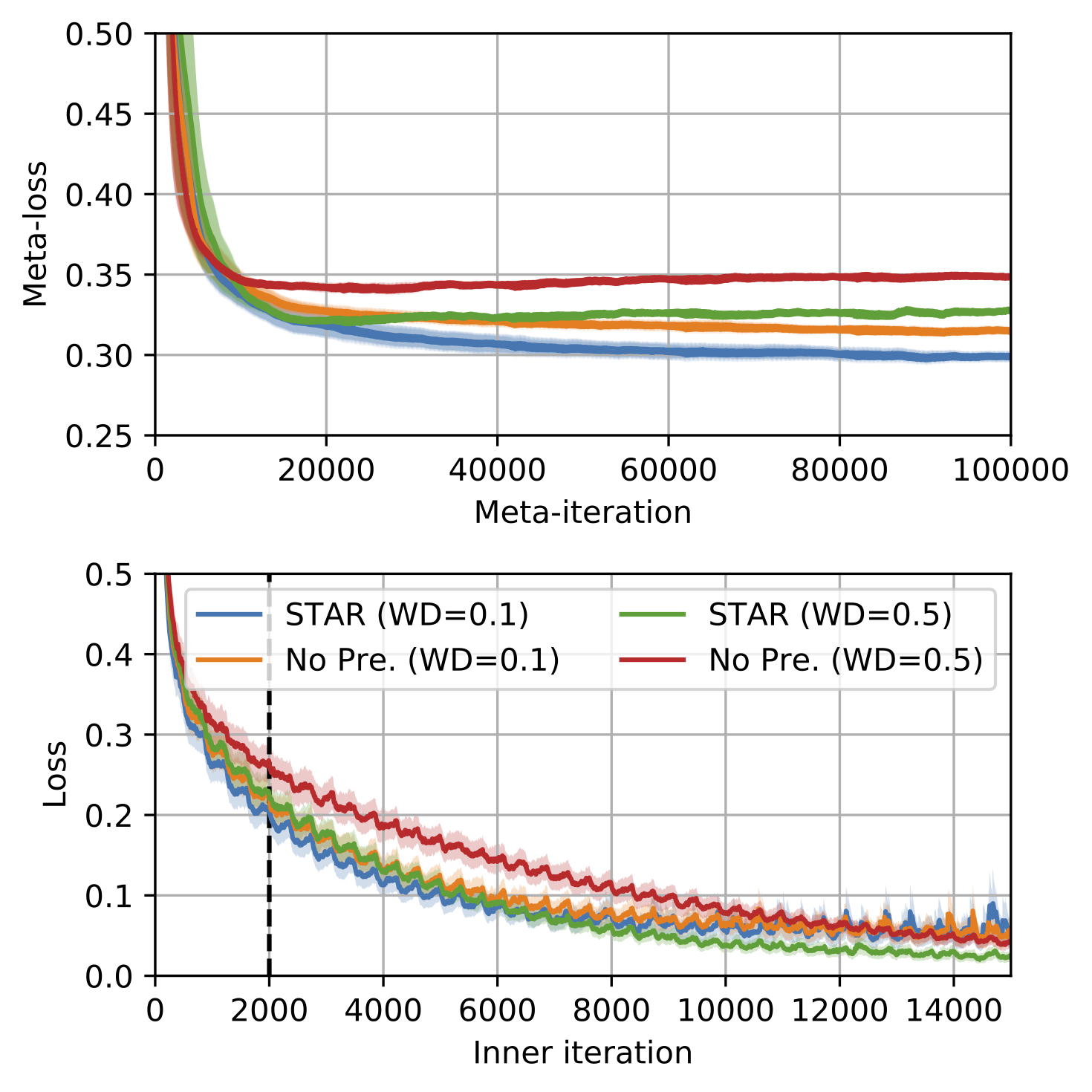}
 \put (3,53) {\textbf{\small(a)}}
 \put (3,4) {\textbf{\small(c)}}
 \put (50,101) {\makebox[0pt]{\footnotesize \textbf{MLP on Fashion MNIST}}}
  \end{overpic}
 \begin{overpic}[height=0.485\linewidth]{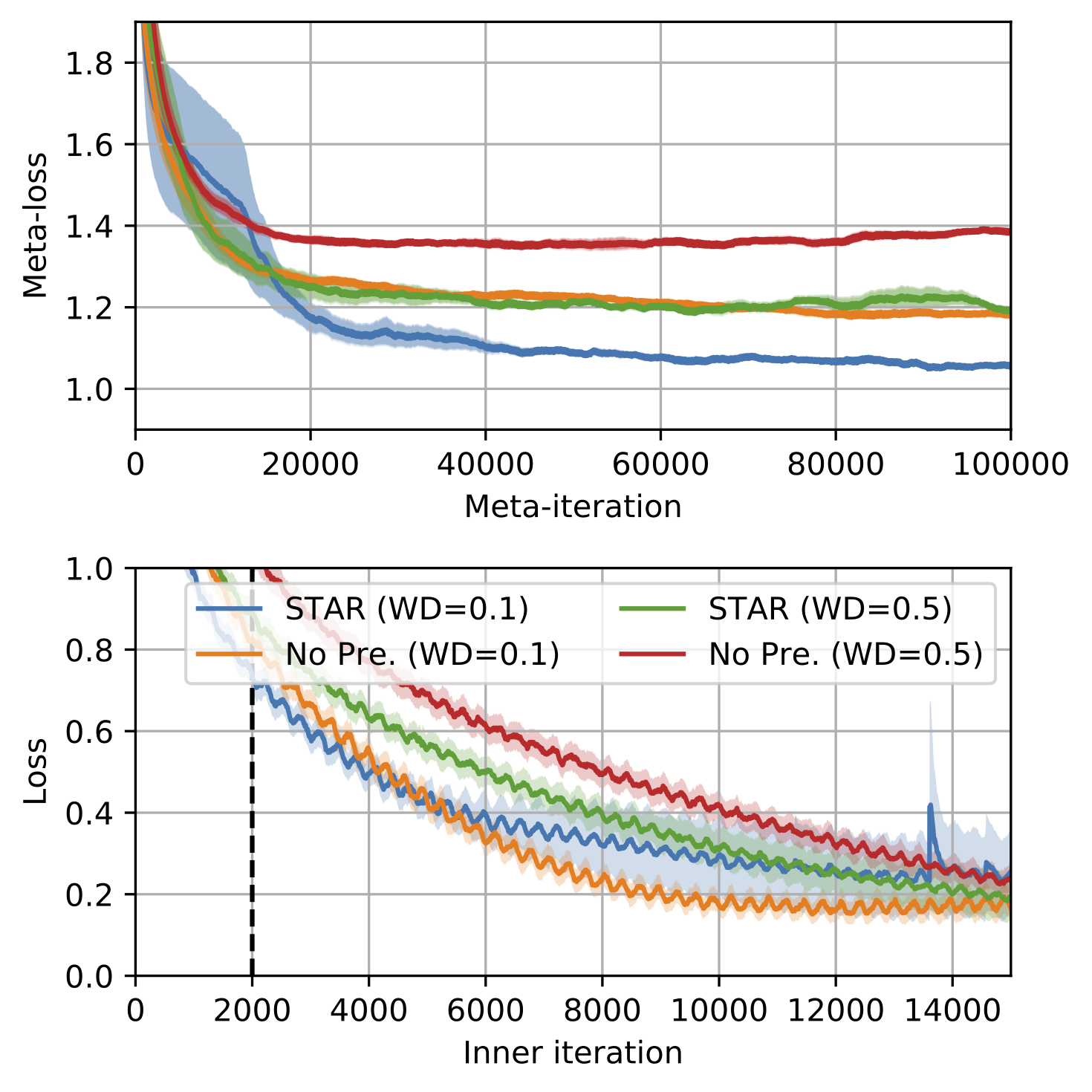}
 \put (3,53) {\textbf{\small(b)}}
 \put (3,4) {\textbf{\small(d)}}
 \put (50,101) {\makebox[0pt]{\footnotesize\textbf{CNN on CIFAR10}}}
  \end{overpic}
    \caption{Comparison of the impact of Adam-style preconditioning on the blackbox term. In these figures, ``No Pre.'' denotes not applying the Adam-style normalization preconditioner. } 
    \label{fig:precon_ablation}
\end{figure}

\begin{figure}[t]
  \centering
 \begin{overpic}[height=0.49\linewidth]{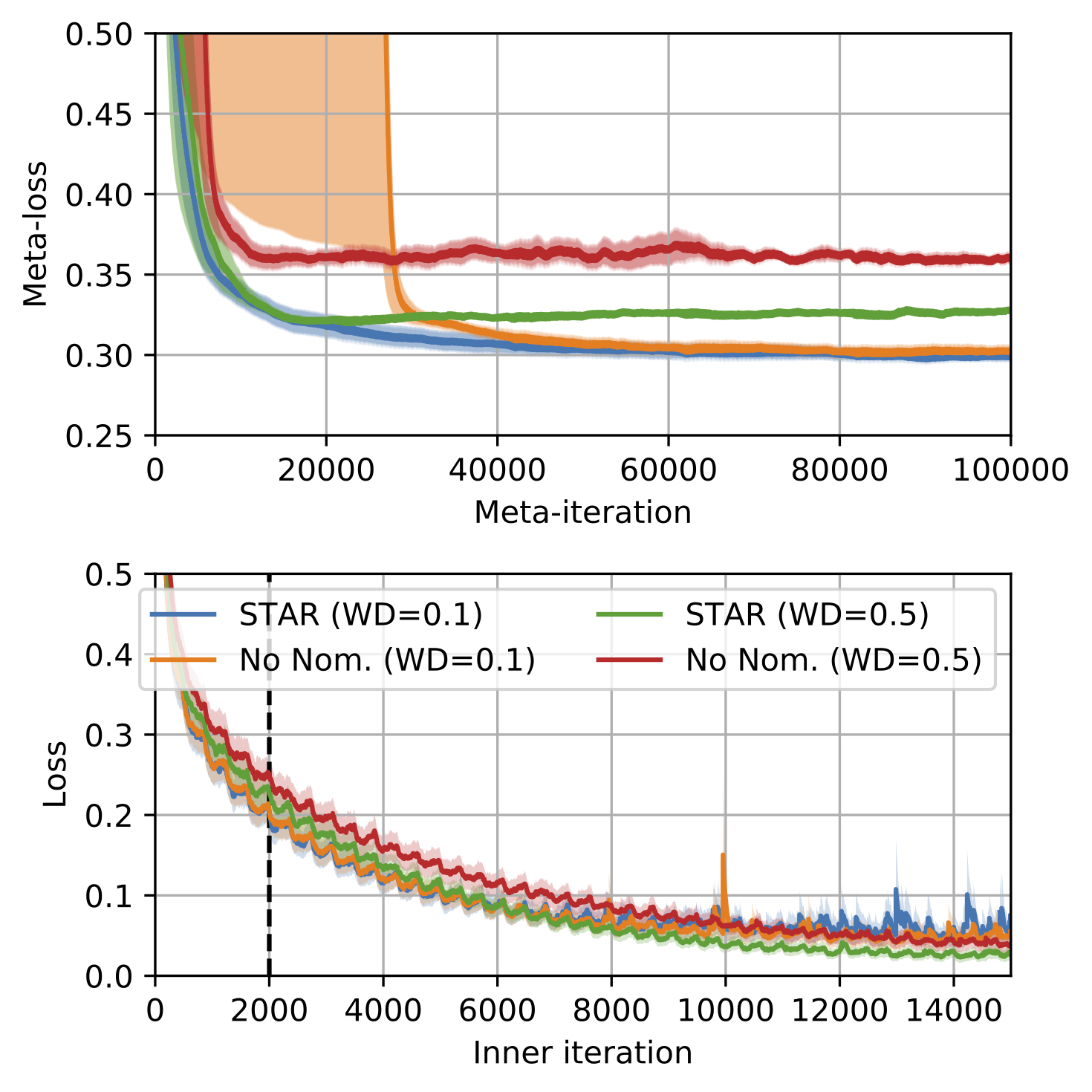}
 \put (3,53) {\textbf{\small(a)}}
 \put (3,4) {\textbf{\small(c)}}
 \put (50,101) {\makebox[0pt]{\footnotesize \textbf{MLP on Fashion MNIST}}}
  \end{overpic}
 \begin{overpic}[height=0.485\linewidth]{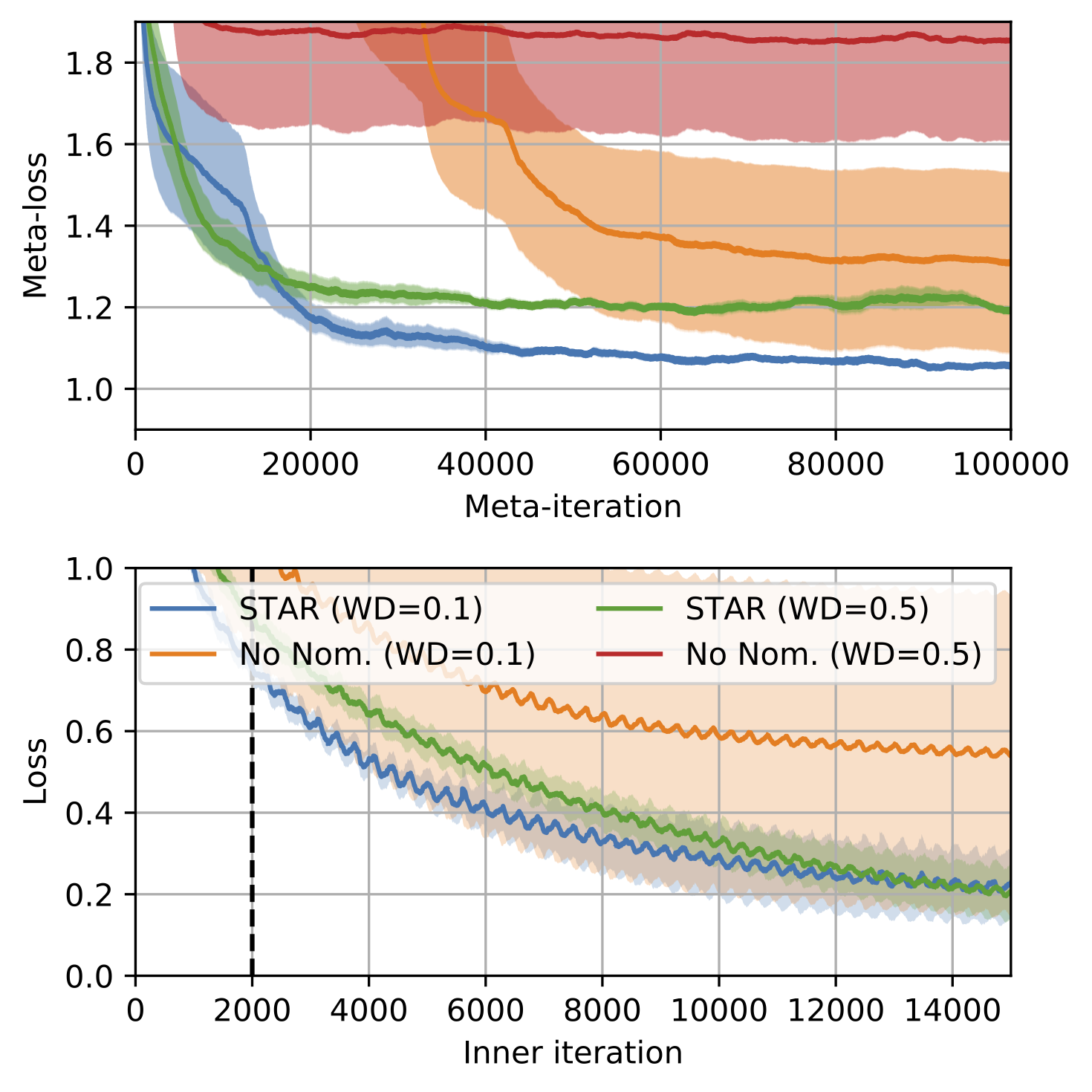}
 \put (3,53) {\textbf{\small(b)}}
 \put (3,4) {\textbf{\small(d)}}
 \put (50,101) {\makebox[0pt]{\footnotesize\textbf{CNN on CIFAR10}}}
  \end{overpic}
    \caption{Comparison of the impact of including the blackbox term. The ``No Nom.'' curves correspond to only including a blackbox term.} 
    \label{fig:nom_ablation}
\end{figure}

\subsection{Full Generalization Experiments}
We evaluate the learned optimizers trained on Fashion MNIST on a wide variety of tasks taken from the learned\_optimization package, with results visualized in Figures \ref{fig:app_generalize_results} and \ref{fig:app_generalize_results2}. Note that both of these figures are the same other than the specific tasks evaluated, and they are split for space reasons. 
For a description of each task, see the source.
Again, we note that the NAdamW baseline is an extremely strong baseline tuned to each task, whereas our model was trained only on the Fashion MNIST task. 

We find in almost all cases that the blackbox optimizer is unstable as compared to Hyperparam, or either of the STAR optimizers.  The performance of the hyperparameter controller versus the STAR models is more mixed; although we note that this degree of stabilization of blackbox models represents a significant advance and there are cases in which the STAR models substantially outperform the hyperparameter controller. 

\begin{figure}
    \centering
 \begin{overpic}[width=\linewidth]{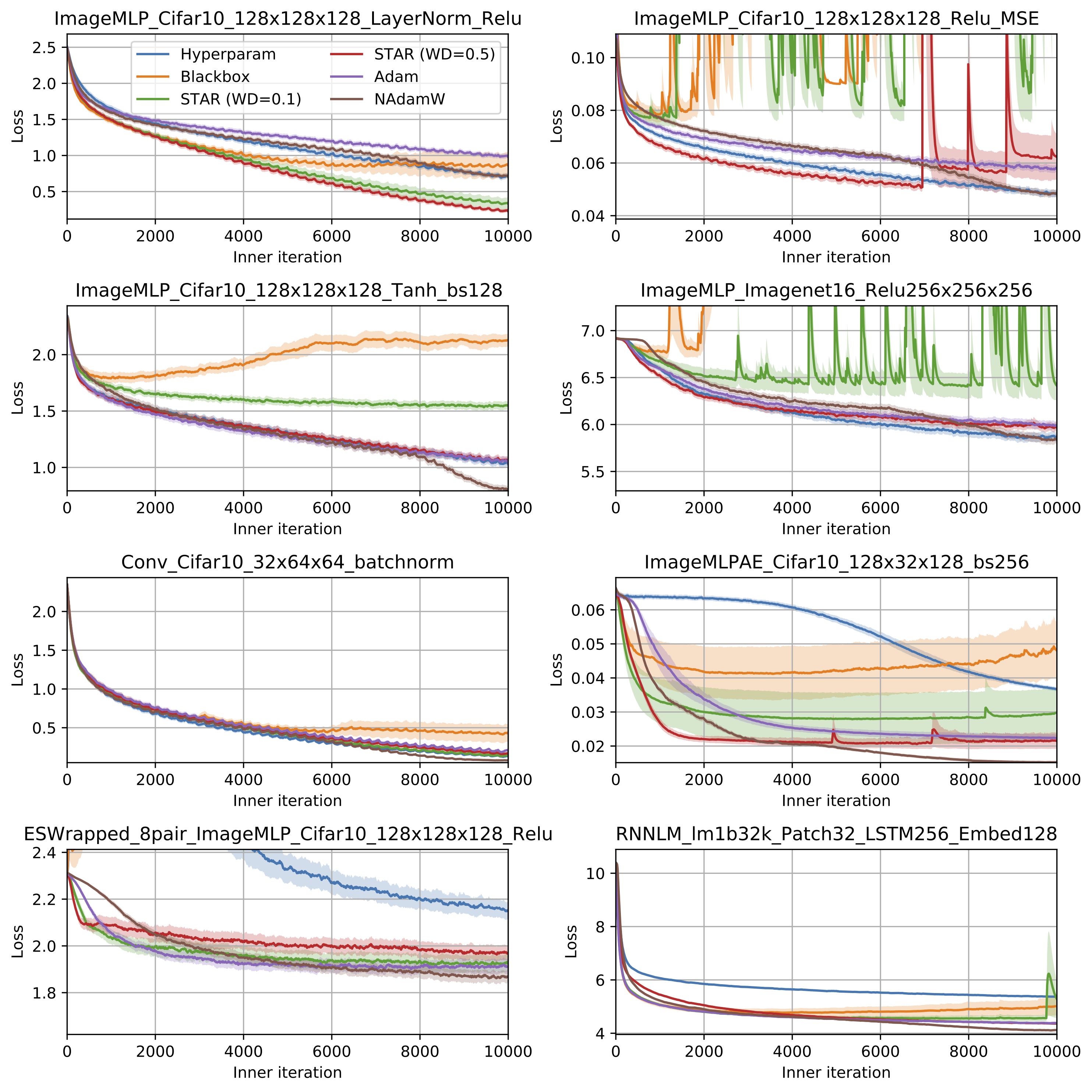}
  \end{overpic}
    \caption{Generalization results across a wide variety of tasks (indicated by subplot title).}
    \label{fig:app_generalize_results}
\end{figure}

\begin{figure}
    \centering
 \begin{overpic}[width=\linewidth]{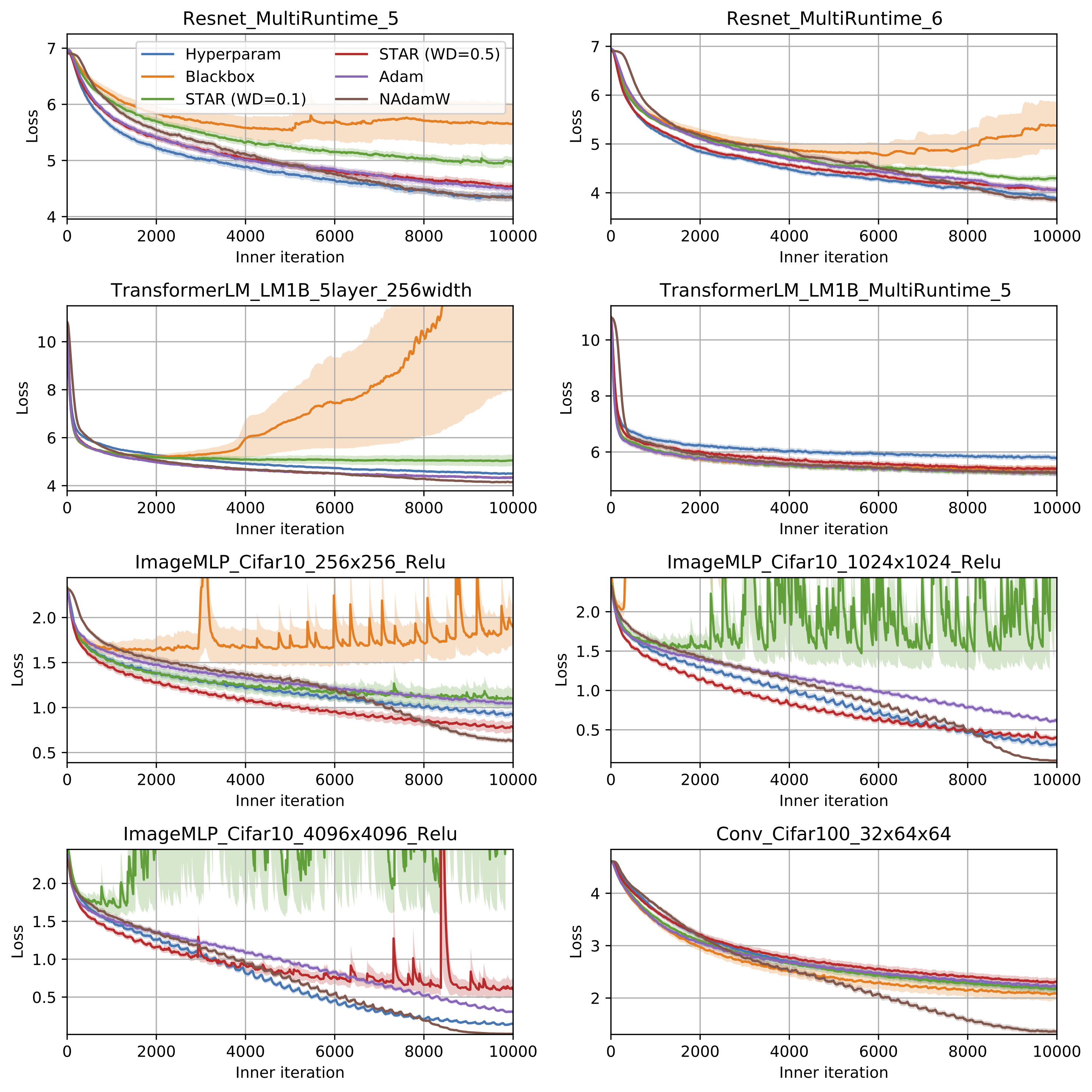}
  \end{overpic}
    \caption{Generalization results across a wide variety of tasks (indicated by subplot title).}
    \label{fig:app_generalize_results2}
\end{figure}

\end{document}